\documentclass[10pt,a4paper]{article}
\PassOptionsToPackage{table}{xcolor}
\usepackage{Risex-Lab-Paper}
\labsetup{
  name={RISE-X Lab},
  shortname={RISE-X Lab},
  tagline={School of Computer Science,\\Shanghai Jiao Tong University},
  report={},
  date={October 30, 2026},
  website={},
  paper-type={},
  masthead={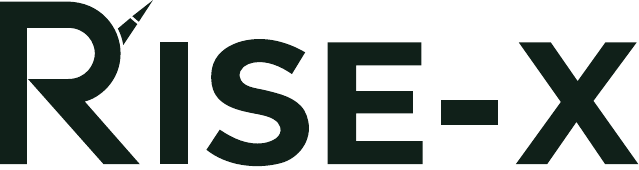},
  masthead-width={25mm},
  accent={123C2C},
  panel={F4F8F0},
  text-width={6.2in},
  title-size={16.5},
  title-leading={27},
  running-title-size={10},
  running-title-leading={0},
  date-size={10},
  date-leading={12}
}

\usepackage{amsmath,amsfonts,bm}

\def\eqref#1{equation~\ref{#1}}

\def\1{\bm{1}}

\DeclareMathAlphabet{\mathsfit}{\encodingdefault}{\sfdefault}{m}{sl}
\SetMathAlphabet{\mathsfit}{bold}{\encodingdefault}{\sfdefault}{bx}{n}

\usepackage{hyperref}
\hypersetup{hypertexnames=false}
\usepackage{url}
\usepackage{algorithm}
\usepackage{algorithmic}
\usepackage{booktabs}
\usepackage{multirow}
\usepackage{graphicx}
\usepackage[table]{xcolor}
\usepackage{pifont}
\usepackage{afterpage}
\usepackage{wrapfig}
\usepackage{amsthm}

\newtheorem{theorem}{Theorem}[section]
\newtheorem{lemma}[theorem]{Lemma}

\newcommand{\cmark}{\ding{51}}
\newcommand{\xmark}{\ding{55}}
\title{%
  \texorpdfstring{%
    \raisebox{-0.10em}{\includegraphics[height=1.2em]{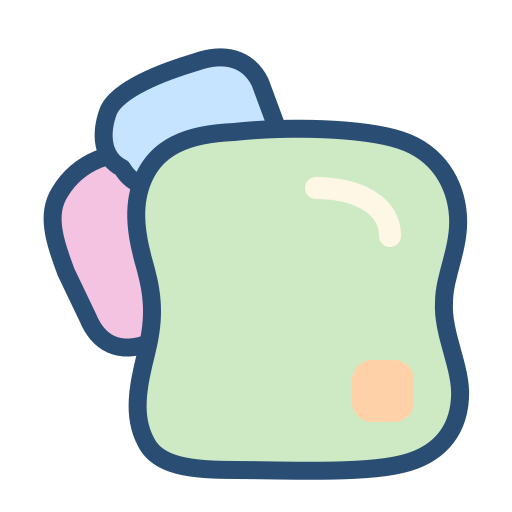}}%
    \hspace{0.20em}%
  }{}%
  In-Model Context Compression with Agentic Soft Tokens for Long-Horizon LLM Agents}

\author{\bfseries%
  \mbox{Xuning He\textsuperscript{1}}\quad
  \mbox{Kai Fan\textsuperscript{2}}\quad
  \mbox{Runyu Zhang\textsuperscript{1}}\quad
  \mbox{Binghuan Wang\textsuperscript{3}}\quad
  \mbox{Yuxiang Zhang\textsuperscript{4}}\\
  \mbox{Xue Jiang\textsuperscript{5}}\quad
  \mbox{Yihong Dong\textsuperscript{1,\Letter}}%
}
\affiliations{%
  \mbox{\textsuperscript{1}Shanghai Jiao Tong University}\quad
  \mbox{\textsuperscript{2}Nanyang Technological University}\quad
  \mbox{\textsuperscript{3}Nankai University}\quad
  \mbox{\textsuperscript{4}Tongji University}\quad
  \mbox{\textsuperscript{5}Peking University}%
}
\paperlinks{%
  \textbf{GitHub Repo:}\enspace
  \url{https://github.com/RISE-X-Lab/Agentic-Soft-Token}%
}
\correspondence{%
  \textbf{Correspondence:}\enspace
  \href{mailto:dongyh@sjtu.edu.cn}{dongyh@sjtu.edu.cn}%
}

\begin{document}

\makelabtitle{
Long-horizon LLM agents play an important role in solving complex and sustained tasks, with ever-growing interaction contexts. However, these accumulated contexts will raise substantial computational cost and obscure key decision-relevant evidence.
Existing methods typically train external compressors or context-management agents to rewrite context as shorter hard-token sequences, but such external selection may not align with the LLM's evolving representation and is necessarily lossy.
In this paper, we introduce \textbf{Agentic Soft-Token Compression (ASTC)}, a training-free compression method using soft tokens within LLM agent for long-horizon tasks.
Specifically, ASTC uses the model's native recurrent dynamics to determine the already absorbed information, leveraging the commonly-used hybrid Gated Delta Network (GDN)--attention.
It then carries this information forward as agentic soft tokens while retaining exact values as hard tokens.
The resulting compact sequence reduces later-layer computation and yields reusable KV caches, extending the savings across the agent's trajectory.
Extensive experiments on long-horizon tasks show that ASTC matches or exceeds the uncompressed agent's accuracy with up to 50.68\% fewer end-to-end Attention Pairs, while trained compressors require 15.4$\times$--103.7$\times$ more Attention Pairs per successful task.
These findings highlight agentic soft tokens as a model-native way to carry absorbed context forward as the task representation evolves.
}
\authornote{}{\Letter: Corresponding author.}

\section{Introduction}
\label{sec:introduction}

Large language models (LLMs) are increasingly deployed as agents that execute
multi-step workflows rather than produce isolated
responses~\citep{hsu2026opencollab,liu2026sprout,dong2026discovery}.
Each step appends system instructions, skills, earlier decisions, tool
interactions, code, and environmental feedback to the next model call. The
resulting context connects decisions across time, so losing an early constraint
or observation can trigger an inconsistent action whose effects propagate
through the remaining trajectory. Preserving the full context, however,
increases prefill and attention computation, enlarges the key--value (KV) cache,
and forces relevant evidence to compete with stale or redundant information.
Long-horizon agents therefore face the central challenge of controlling this
growing cost without disrupting the cross-step information flow required for
coherent decisions.

\afterpage{%
\begin{figure}[t!]
    \centering
    \includegraphics[width=0.90\textwidth]{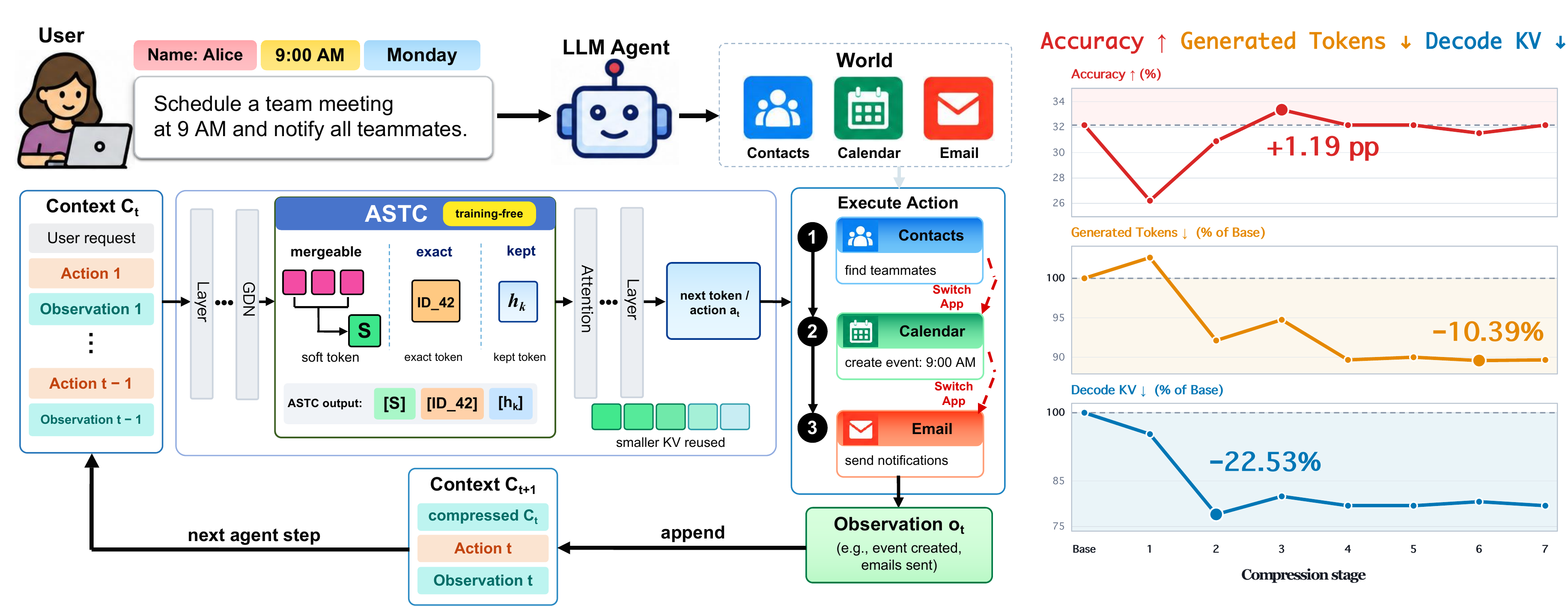}
    \caption{ASTC in a long-horizon agent and in-model stage sensitivity on
    AppWorld.}
    \label{fig:overview}
\end{figure}
}

Reducing context under this constraint is not an ordinary summarization problem.
Agent context mixes semantic content with exact values.
Explanations, code, and repeated observations can become redundant once the
acting LLM has integrated their information, whereas identifiers, API arguments, and
numerical values may determine later actions and must remain exact. Information
that has little effect on the current action may also become decisive several
steps later. Such delayed relevance makes one-shot importance estimates
unreliable. Compressibility therefore cannot be inferred from length,
repetition, or semantic similarity alone. It depends on what the acting LLM has
already absorbed, what the current decision requires, and what a future step may
need to recover. Because compression recurs as the trajectory grows, each
operation must save more computation than it consumes. Effective context compression must
therefore compact information already absorbed by the acting LLM, preserve
exact or potentially useful evidence as the task evolves, and deliver savings
across subsequent decoding and agent steps.

Existing context-compression methods typically rely on training, using either
hard-token or soft-token compression. Hard-token methods prune or rewrite text,
risking irreversible loss of exact values~\citep{jiang2023llmlingua,
pan2024llmlingua2,xu2024recomp,kang2026acon,liu2025cat}. Soft-token methods
preserve continuous representations but require separately trained encoders or
compressors~\citep{mu2023gist,chevalier2023autocompressors,ge2024icae,
li2026endtoend}. More fundamentally, separately trained compressors create a
\emph{model-taste gap}. The information they learn to preserve may not match
what the acting LLM has already absorbed or will need later. This motivates an
in-model approach that uses the acting LLM's
recurrent state to determine compressibility and carries absorbed information
forward as agentic soft tokens.

In this paper, we introduce \textbf{Agentic Soft-Token Compression (ASTC)}, a
training-free approach that uses the acting LLM's own computation to decide
what its growing context must retain. The
early layers have read the full context and the later layers would
otherwise continue to process every original token.
Figure~\ref{fig:overview} places this boundary within a continuing agent
workflow and shows how the choice of compression stage changes task accuracy
and workload. In hybrid Gated Delta Network (GDN)--attention
models~\citep{yang2025gated}, ASTC examines native recurrent updates and checks
whether the retained state can recover a token's contribution. It keeps exact
values explicit as hard tokens and carries eligible content forward as agentic
soft tokens. The shorter sequence reduces attention work for the current
prediction and creates smaller KV caches that remain reusable as the agent
continues. Each compression decision is therefore informed by what the model
has already absorbed, and its benefit extends across the trajectory.

Across diverse long-horizon tasks, ASTC maintains task quality close to
or above the uncompressed model while reducing end-to-end Attention Pairs by
up to 50.68\%. Preserving exact values avoids the sharp accuracy loss seen when
protection is removed, while the compact cache continues to save attention
work over later decisions with only small changes in action distributions.
The same training-free criterion extends beyond tool observations, matching
uncompressed accuracy with 24.52\% fewer Decode KV interactions under
full-history compression and reducing workload in additional long-context and
code-repair tasks. ASTC combines the highest
task accuracy with the lowest Attention Pairs per successful task, a
15.4$\times$--103.7$\times$ advantage over trained compressors.

Our contributions are summarized as follows:
\begingroup
    \setlength{\itemsep}{5pt}
    \setlength{\parsep}{0pt}
    \setlength{\parskip}{0pt}
    \setlength{\topsep}{10pt}
    \setlength{\partopsep}{0pt}
\begin{itemize}
    \item We identify the core challenge of long-horizon compression as reducing
    context cost without losing future-relevant evidence or diverging from the
    acting LLM's task representation.
    \item We introduce ASTC, a training-free method that uses native recurrent
    dynamics to retain exact values as hard tokens and carry absorbed content
    forward as soft tokens with reusable compact KV caches across subsequent
    agent steps.
    \item We bound trained-compressor objective mismatch, establish conditions
    under which ASTC has lower next-token distortion than equal-length
    hard-token representations, and quantify attention savings from cache reuse.
    \item Across long-horizon tasks, ASTC achieves comparable or better
    accuracy with up to 50.68\% fewer end-to-end Attention Pairs and a
    15.4$\times$--103.7$\times$ advantage in end-to-end Attention Pairs per
    successful task over trained compressors.
\end{itemize}
\endgroup

\afterpage{%
\begin{figure}[t!]
    \centering
    \includegraphics[width=\textwidth]{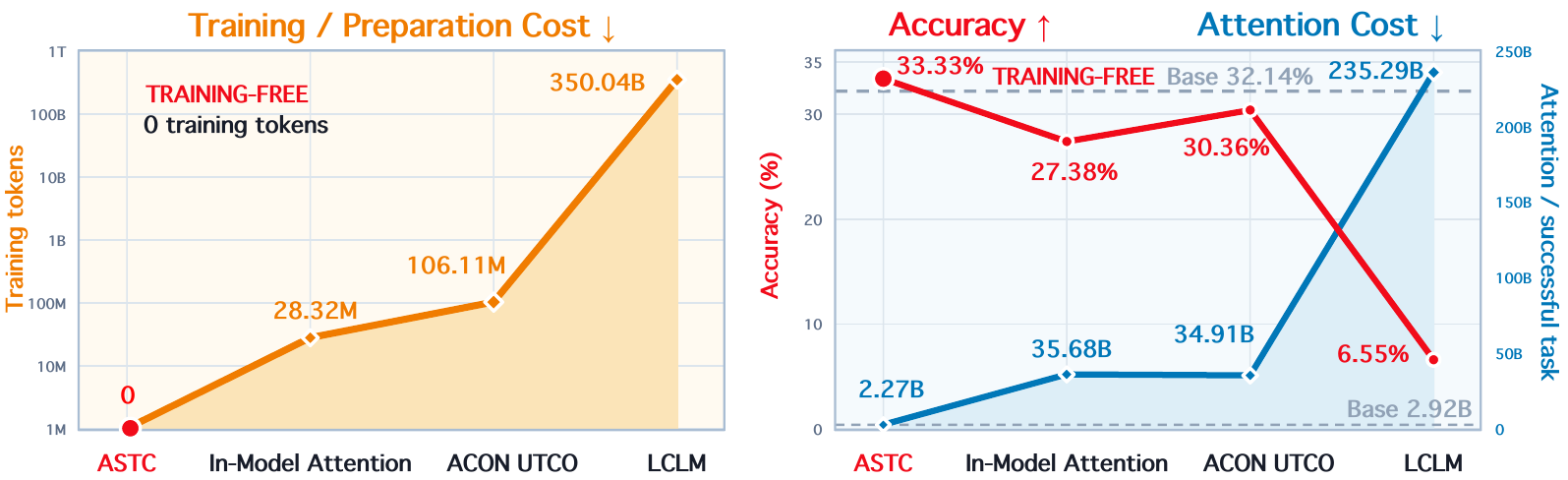}
    \caption{Training/preparation cost and AppWorld test results.}
    \label{fig:training-cost-results}
\end{figure}
}

\section{Motivation}
\label{sec:motivation}

\subsection{In-Model Soft Tokens}

Context compression determines both how many tokens remain and which evidence
reaches the acting LLM. Existing methods make this decision before the model
processes the compressed context, either pruning tokens or rewriting earlier
interactions as shorter text~\citep{jiang2023llmlingua,pan2024llmlingua2,
kang2026acon,liu2025cat}. This ordering creates a selection mismatch because the
value of a context span is not intrinsic to its text. It depends on how the
acting LLM represents the current task, what it has already absorbed, and which
action it is about to take; the same observation can be redundant at one step
and decisive at another. An external compressor must approximate this
model-dependent boundary from text alone. In a long-horizon trajectory, a
mistaken deletion can propagate because later decisions are made from the
retained context.Hard-token output adds a second bottleneck because retained information must be
serialized as a shorter sequence of vocabulary tokens, so exact structure or
features distributed across positions must be reconstructed as short text. Even
a correctly selected span can therefore lose task-relevant information when
forced into a short textual rewrite. Moving compression inside the model lets
the acting LLM judge compressibility where task information is represented, while
soft tokens retain information in continuous contextual representations.
In-model selection and soft-token representation are therefore complementary,
letting the acting LLM judge what matters and preserve that information in its own
representation space, thereby avoiding both external selection mismatch and
discrete textual reconstruction.

\subsection{Tool Observations}

\begingroup
\setlength{\columnsep}{12pt}
\setlength{\intextsep}{3pt}
\begin{wrapfigure}{r}{0.50\textwidth}
\centering
\includegraphics[width=\linewidth]{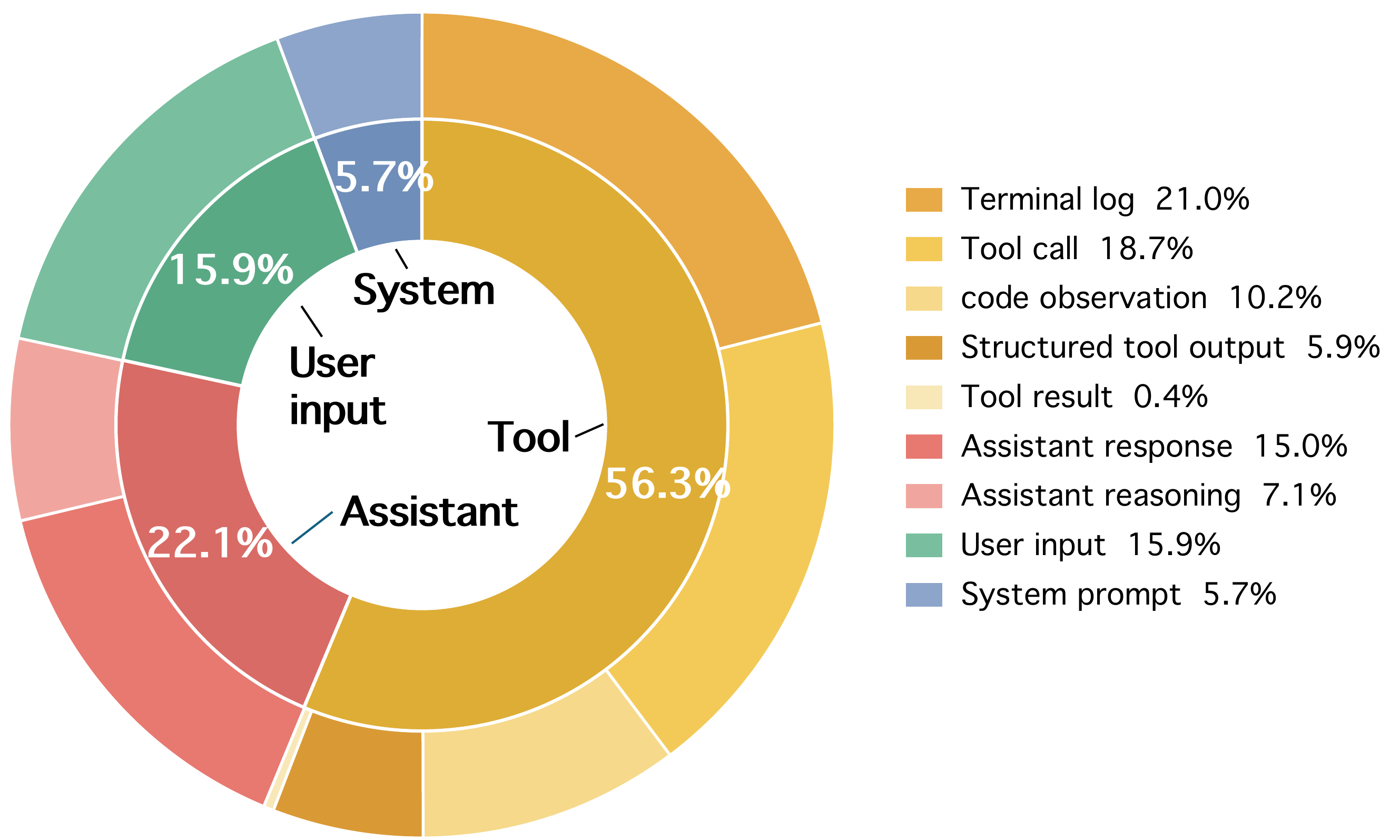}
\caption{Composition of accumulated context in real-world agent trajectories.}
\label{fig:context-composition}
\end{wrapfigure}

We analyze agent trajectories from real-world user interactions across systems
such as Fable~5, GPT-5.6 Sol, Kimi~K3, and
GLM-5.2~\citep{anthropic2026fable,openai2026gpt56,moonshot2026kimik3,
zai2026glm52}. As Figure~\ref{fig:context-composition} shows, a consistent
pattern emerges across these trajectories. Tool observations occupy the
largest share of the accumulated context and continue to grow as the agent
interacts with its environment. In our experiments, tool observations are the
primary compression target.

Tool observations are difficult to compress because they mix compressible
natural-language and code spans with exact values, such as paths, identifiers,
and checksums, that later actions may consume verbatim. Our preliminary
experiments show that the former tolerates aggressive compression, whereas
changing a single character in the latter can invalidate a later action. This
heterogeneity makes tool observations a demanding but informative testbed.
Effective agent compression must therefore compact semantic information
distributed across many tokens without altering the exact values on which
future actions depend.
\par
\endgroup

\subsection{Model-Native Selection}

Our initial approach followed existing work by training a separate in-model
compressor to produce soft tokens. We inserted an attention--MLP compressor
after either a single backbone layer or multiple layers, thereby compressing
the hidden sequence once or repeatedly, and trained it with
\textbf{On-policy distillation (OPD)}. Comparing these variants with ACON and
LCLM reveals a consistent pattern: training-based compression incurs
substantial preparation cost yet delivers weak task performance, while
repeated intervention across more layers offers no remedy
(Figure~\ref{fig:training-cost-results})~\citep{kang2026acon,li2026endtoend}.
This failure reflects a difficult optimization problem. An auxiliary
compressor must infer what the acting LLM has already absorbed, retain
information that later steps may require, and emit features compatible with
downstream layers, all through an objective separate from the LLM's native
prediction process.

Hybrid GDN--attention models already compute a signal closely aligned with this
decision during their normal forward pass. A GDN recurrent state acts as an
online associative memory. Its gated delta update can be written
as~\citep{yang2025gated}
\begin{equation}
    \mathbf{S}_t
    = \alpha_t \mathbf{S}_{t-1}
    + \beta_t \mathbf{r}_t \mathbf{k}_t^{\top},
    \qquad
    \mathbf{r}_t
    = \mathbf{v}_t - \alpha_t \mathbf{S}_{t-1}\mathbf{k}_t,
    \label{eq:gdn-update}
\end{equation}
where $\alpha_t$ controls how much previous memory survives, $\beta_t$ controls
the write strength, and $\mathbf{r}_t$ is the part of the incoming value that
cannot be recovered from the retained state. Information already represented
by the state produces a small correction, whereas novel or conflicting
information changes the memory more strongly. The update therefore measures
information relative to what the acting LLM currently remembers, not importance
inferred from surface form alone. Moreover, the gate and delta update are
optimized jointly with the LLM's predictions, so this distinction shares the
representation space and task state used by downstream layers.
Reusing GDN dynamics avoids teaching an auxiliary compressor to approximate the acting
LLM's own notion of novelty. Training-free selection is therefore not merely a
cheaper alternative; it follows from recognizing that the acting LLM has
already performed the information judgment that compression requires.

\section{Related Work}
\label{sec:related-work}

Discrete context compression shortens the sequence before it reaches the acting
LLM. Selective Context~\citep{li2023selective} prunes redundant spans;
LLMLingua~\citep{jiang2023llmlingua} and
LongLLMLingua~\citep{jiang2024longllmlingua} rank tokens for prompt pruning;
LLMLingua-2~\citep{pan2024llmlingua2} learns selection from distilled labels;
and RECOMP~\citep{xu2024recomp} trains compressors for retrieved evidence.
For extended agent workflows, ACON~\citep{kang2026acon} optimizes textual
compression guidelines from task failures, while Context as a Tool
(CAT)~\citep{liu2025cat} trains SWE-Compressor to invoke context management
during execution.
MEM1~\citep{zhou2025mem1} instead trains a compact memory jointly with agent
reasoning. While these lines reduce accumulated context, text-level selection
can discard exact evidence or diverge from the acting LLM's evolving task
representation, and learned memory policies require additional training.
Meta Context Engineering~\citep{ye2026mce} co-evolves agentic skills and
context artifacts to optimize task-specific context construction.

Soft-token methods retain information in continuous states. Gist
Tokens~\citep{mu2023gist} learns compressed prompt activations,
AutoCompressors~\citep{chevalier2023autocompressors} produce recursive summary
vectors, and ICAE~\citep{ge2024icae} trains compact memory slots. Sentinel
Tokens~\citep{ren2023sentinel} and Activation
Beacon~\citep{zhang2024beacon} compress intermediate activations for later
processing. DAST~\citep{chen2025dast} allocates soft tokens according to
contextual information density, while Latent Context Language Models
(LCLMs)~\citep{li2026endtoend} train dedicated encoder--decoder models for
large-scale compression. For tool-using LMs, \citet{xu2024toolcompress}
combine compressed representations with raw
tokens for exact tool interfaces. These studies show that continuous memory can
support incremental reuse and coexist with exact-token retention. Their
compression behavior is nevertheless learned through model adaptation or
auxiliary components, and their primary settings differ from a growing history
of agent actions and observations.

KV-cache eviction offers another route: H$_2$O~\citep{zhang2023h2o}
retains recent and heavy-hitter tokens, while SnapKV~\citep{li2024snapkv}
selects important positions per attention head.
Archer~\citep{he2026archer} adapts prompt-K/V reuse to evolving response
states in rollback-capable diffusion language models.

ASTC brings the continuous memory studied in soft-token compression to the
growing histories addressed by agent context-management work. By operating
within the acting LLM without a separately trained compressor, it keeps
compression tied to the model's evolving task representation. To the best of
our knowledge, ASTC is among the first in-model soft-token methods designed
for context compression in long-horizon LLM agents.

\begin{figure}[t!]
    \centering
    \includegraphics[width=\textwidth]{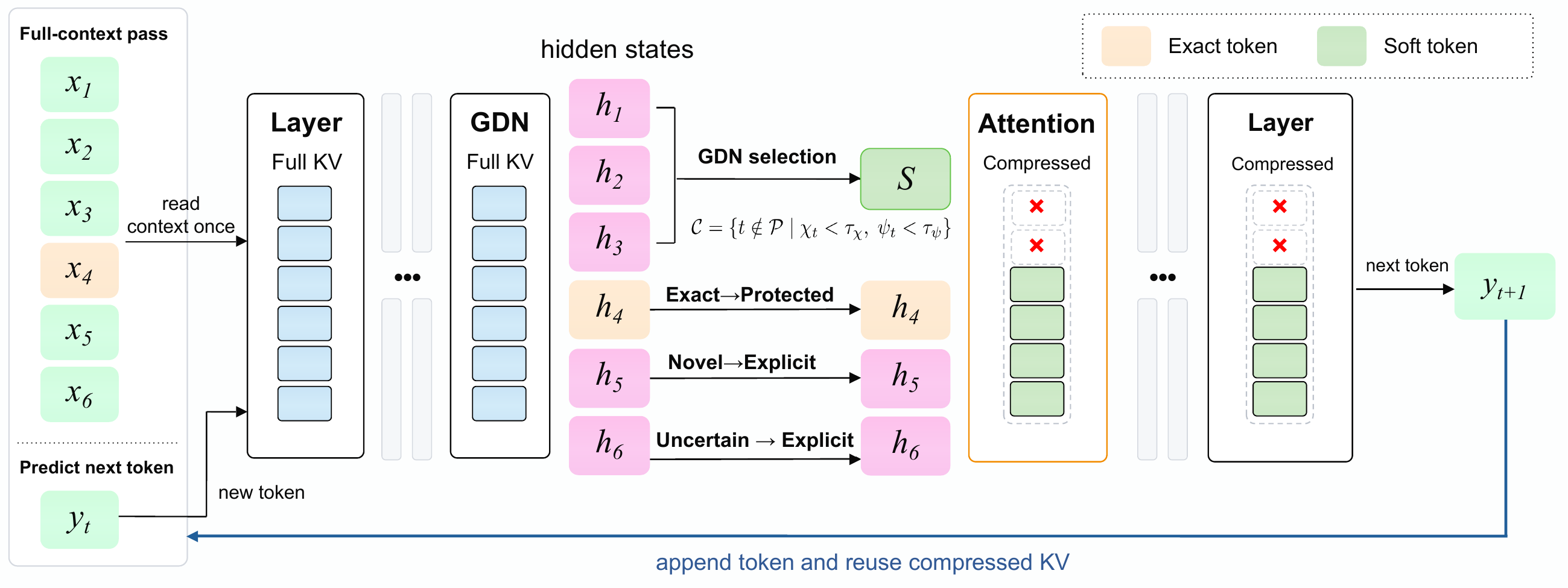}
    \caption{Overview of ASTC for in-model context compression and KV cache reuse.}
    \label{fig:in-model-compression}
\end{figure}

\section{Method}
\label{sec:method}

\subsection{Agentic Soft-Token Compression}

At step $a$, early layers process every token in the uncached suffix
$\Delta X^{(a)}$, using cache $\mathcal K_<^{(a-1)}$ for prior context.
ASTC then compresses the new hidden states in tool-observation spans
$\mathcal O$ subject to protection mask $P$:
\begin{equation}
    \begin{aligned}
    H_{\ell^\star}^{\mathrm{new}}
    &=F_{0:\ell^\star}(\Delta X^{(a)};\mathcal K_{<}^{(a-1)})
    \in\mathbb{R}^{m\times d},\\
    \widetilde H_{\ell^\star}^{\mathrm{new}}
    &=\mathcal{C}(H_{\ell^\star}^{\mathrm{new}};\mathcal{O},P)
    \in\mathbb{R}^{\widetilde m\times d},\quad \widetilde m\leq m.
    \end{aligned}
    \label{eq:astc-boundary}
\end{equation}

$P$ keeps recognized exact values and the action boundary as hard tokens,
although it may miss some exact fields. For each GDN head $h$, let
$\mathbf S_{i-1,h}$ be its state before position $i$, with key
$\mathbf k_{i,h}$, value $\mathbf v_{i,h}$, and decay
$\alpha_{i,h}=\exp(g_{i,h})$. Let $\mathbf v_i$ and
$\boldsymbol\delta_i$ concatenate the head-wise vectors. The first test
measures innovation relative to the recurrent state:
\begin{equation}
    \boldsymbol{\delta}_{i,h}
    = \mathbf{v}_{i,h}-\alpha_{i,h}\,
      \mathcal{R}(\mathbf{S}_{i-1,h},\mathbf{k}_{i,h}),
    \qquad
    \chi_i =
    \frac{\operatorname{RMS}(\boldsymbol{\delta}_i)}
         {\operatorname{RMS}(\mathbf{v}_i)+\epsilon},
    \label{eq:astc-innovation}
\end{equation}
where $\mathcal R(\mathbf S,\mathbf k)=\mathbf S\mathbf k$. A small
$\chi_i$ means the incoming value was largely predicted by the preceding
state. Prediction at position $i$ does not ensure that later updates retain
the value, so ASTC also reads the observation-end state $\mathbf S_{o,h}$
with a fixed, untrained spectral basis
$\mathbf Q_h\in\mathbb R^{d_{k,h}\times M}$, using the pooling slot budget $M$:
\begin{equation}
    \widehat{\mathbf{v}}_{i,h} =
    \mathbf{S}_{o,h}\mathbf{Q}_h
    (\mathbf{Q}_h^\top\mathbf{Q}_h+\lambda\mathbf{I})^{-1}
    \mathbf{Q}_h^\top\mathbf{k}_{i,h},
    \qquad
    \psi_i =
    \frac{\operatorname{RMS}(\mathbf{v}_i-\widehat{\mathbf{v}}_i)}
         {\operatorname{RMS}(\mathbf{v}_i)+\epsilon}.
    \label{eq:astc-readability}
\end{equation}
Here $\widehat{\mathbf v}_i$ concatenates the per-head reads. ASTC selects
an unprotected position only when it is both locally predictable
($\chi_i<\tau_\chi$) and still readable at the endpoint
($\psi_i<\tau_\psi$).

The selected positions still need a compact representation. ASTC partitions
them into at most $M$ causal groups, trims outliers around a member of each
group, and mean-pools the retained states of group $G$ at the boundary:
\begin{equation}
\mathbf z_G=\operatorname{cast}\!\left(\frac{1}{|G|}
\sum_{i\in G}\mathbf h_i^{\mathrm{FP32}}\right).
\label{eq:native-soft-token}
\end{equation}
The replacement stays in the acting model's hidden space rather than being
rewritten as text.
Only multi-position groups with maximum post-cast relative hidden-state
error at most $\tau_h$ become \emph{agentic soft tokens}. Protected, rejected, and
unmerged positions remain hard tokens. Each soft token takes its group's
latest logical position, and a stable sort restores causal order.

Compression leaves the pre-boundary caches at their original-token lengths,
while downstream attention caches the shorter mixed sequence. This cache
is reused during decoding and later agent steps, so an accepted merge saves
work beyond the prefill in which it occurs. Algorithm~\ref{alg:astc}
specifies the per-observation commit gate; bounded pooling is detailed in
Appendix~\ref{app:experimental-setting}.

\begin{algorithm}[H]
\caption{ASTC at one prefill step with a persistent hybrid cache}
\label{alg:astc}
\begin{algorithmic}[1]
\small
\REQUIRE Uncached suffix $\Delta X^{(a)}$, existing layer-specific hybrid
cache $\mathcal K^{(a-1)}$, closed observation spans $\mathcal O$ in this
suffix, protection mask $P$, frozen $F$, boundary $\ell^\star$, reader
$\mathbf Q$, thresholds $\tau_\chi,\tau_\psi,\tau_h$, group cap $M$,
budget $B$, remaining-layer count
$L_{\mathrm{rem}}$, fraction $\rho$
\ENSURE Next-token logits; updated hybrid cache $\mathcal K^{(a)}$
\FOR{each observation $o\in\mathcal O$}
    \STATE $\mathcal E_o\leftarrow\{i\in o:P_i=0\ \text{and $i$ is an original position}\}$
    \STATE $r_o\leftarrow\max(\lceil B/L_{\mathrm{rem}}\rceil,
    \lceil\rho|\mathcal E_o|\rceil)$
    \STATE Bypass $o$ if it already contains a soft token or
    $|\mathcal E_o|<r_o+1$
\ENDFOR
\STATE Run $F_{0:\ell^\star}$ on $\Delta X^{(a)}$ with its existing cache;
update early-layer caches and obtain $H_{\ell^\star}^{\mathrm{new}}$
\STATE During this run, capture native GDN tensors and read endpoint states
only for non-bypassed spans
\FOR{each non-bypassed observation $o\in\mathcal O$}
    \STATE $\mathcal C_o\leftarrow
    \{i\in\mathcal E_o:\chi_i<\tau_\chi\}$
    \STATE Continue to the next observation if $|\mathcal C_o|<r_o+1$
    \STATE $\mathcal C_o\leftarrow
    \{i\in\mathcal C_o:\psi_i<\tau_\psi\}$
    \STATE Continue to the next observation if $|\mathcal C_o|<r_o+1$
    \STATE $(\mathcal M_o,\mathcal Z_o)\leftarrow
    \textsc{BoundedPool}(H_{\ell^\star}^{\mathrm{new}}[\mathcal C_o],M,\tau_h)$
    \IF{$|\mathcal M_o|-|\mathcal Z_o|\geq r_o$}
        \STATE Replace $\mathcal M_o$ by $\mathcal Z_o$ at each group's greatest
        logical position
    \ENDIF
\ENDFOR
\STATE Stably sort the mixed new suffix by logical position
\STATE Run $F_{\ell^\star+1:L}$ on that suffix with its existing cache;
update downstream attention KV and GDN recurrent and convolutional states
\STATE Apply final norm and LM head to the last protected original position
\RETURN next-token logits and $\mathcal K^{(a)}$
\end{algorithmic}
\end{algorithm}

Here $B$ is the minimum downstream token--layer reduction,
$L_{\mathrm{rem}}$ counts layers after the boundary, and $\rho$ is the
minimum reduction as a fraction of eligible positions in each observation.

\subsection{Theoretical Analysis}
\label{sec:theory}

We examine what native selection can certify, how a compact cache saves work
over time, and how merging can shift downstream attention. The distributional
claims below concern a fixed next-token prefix; they do not imply identical
agent trajectories.

\subsubsection{Model-Native Selection and Continuous Representation}

Fix a history, a next-token prefix, and a boundary $\ell^\star$. Let $H$ be
the full boundary sequence and $Z$ a length-$k$ replacement at fixed logical
positions. We measure the acting model's next-token distortion by
\begin{equation}
D_\theta(H,Z)=D_{\mathrm{KL}}\!\left(
p_\theta^{>\ell^\star}(\cdot\mid H)\,\|\,
p_\theta^{>\ell^\star}(\cdot\mid Z)\right).
\label{eq:model-taste}
\end{equation}
A separately trained compressor minimizes a surrogate objective
$J_{\mathrm{tr}}$, which need not match $D_\theta$. On the same nonempty
admissible set $\mathcal A_k$, if both minima are attained,
$\sup_{Z\in\mathcal A_k}|D_\theta(H,Z)-J_{\mathrm{tr}}(H,Z)|
\leq\varepsilon$, and the returned $Z_{\mathrm{tr}}$ is within $\xi$ of
the surrogate minimum, then
\begin{equation}
0\leq D_\theta(H,Z_{\mathrm{tr}})
-\min_{Z\in\mathcal A_k}D_\theta(H,Z)
\leq 2\varepsilon+\xi.
\label{eq:taste-gap}
\end{equation}
This decomposition exposes a possible source of error, but does not establish
a nonzero gap or rank methods with different feasible sets.

ASTC instead reads the acting model's own GDN signals. The update in
Equation~\ref{eq:gdn-update} yields a per-head write bound proportional to $\chi_i$,
while $\psi_i$ measures what remains readable at the observation endpoint.
These tests describe state coverage, but replacing positions can also shift
downstream attention and recurrent states. For a fixed next-token prefix,
suppose those effects admit aligned local perturbations $e_j$ and the
remaining logit map is $L_j$-Lipschitz at each site. Write $\mathbf l$ and
$\widetilde{\mathbf l}$ for their logits, and $p$ and $\widetilde p$ for
their next-token distributions. With $E=\sum_jL_je_j$, we obtain
\begin{equation}
\|\widetilde{\mathbf l}-\mathbf l\|_\infty\leq E,
\qquad D_{\mathrm{KL}}(p\|\widetilde p)\leq 2E.
\label{eq:fixed-prefix-stability}
\end{equation}
Hence the next-token distribution changes little when $E$ is small, and the
greedy token is unchanged if its original logit margin exceeds $2E$.
The selector tests alone do not bound every term in $E$.

Stability addresses whether a replacement is safe; the representation space
determines what it can retain at a fixed budget. At equal physical length
$k$ and fixed logical positions in an unchanged context, hard-token
reachable boundary states form
a finite set $\mathcal A_k^{\mathrm{hard}}$ inside the continuous space
$\mathcal A_k^{\mathrm{soft}}=\mathbb R^{k\times d}$. With full-context
distribution $p_H=p_\theta^{>\ell^\star}(\cdot\mid H)$, let
$\delta_k(H)=\min_{Z\in\mathcal A_k^{\mathrm{hard}}}
\|p_H-p_\theta^{>\ell^\star}(\cdot\mid Z)\|_1$.
Set inclusion and Pinsker's inequality
give
\begin{equation}
\begin{aligned}
\inf_{Z\in\mathcal A_k^{\mathrm{soft}}}D_\theta(H,Z)
&\leq\min_{Z\in\mathcal A_k^{\mathrm{hard}}}D_\theta(H,Z),\\
D_\theta(H,Z_{\mathrm{ASTC}})
&<\min_{Z\in\mathcal A_k^{\mathrm{hard}}}D_\theta(H,Z)
\quad\text{if }E<\delta_k(H)^2/4.
\end{aligned}
\label{eq:soft-space}
\end{equation}
The first inequality describes representational capacity. The second gives
a sufficient condition for ASTC's particular soft-token output to outperform
every length-$k$ hard-token representation. Its premise is not guaranteed by
the selector thresholds; proofs appear in Appendix~\ref{app:theory-derivations}.

\subsubsection{One-Pass and Amortized Efficiency}

The boundary also determines when savings arise. Early layers still read the
new block, but later attention sees fewer positions in the same pass; the
avoided work must exceed selection and pooling cost for a net gain. Once the
shorter KV cache persists, subsequent queries revisit fewer keys.
\begin{theorem}[Amortized attention-pair reduction]
\label{thm:amortized-attention}
Suppose the affected downstream layers contain $A_{>}$ full causal attention
layers. Before one commit, the current prefill has physical length
$n_{\mathrm{phys}}$, including an unchanged cached prefix, and all $r$
removed positions belong to the newly processed block. Under the same
continuation of $N_q$ later queries reaching these layers, assume the
compared caches retain their $r$-position difference. Then the total
reduction in query--key token pairs is
\begin{equation}
\Delta Q=A_{>}r\left(
\frac{2n_{\mathrm{phys}}-r+1}{2}+N_q\right).
\label{eq:amortized-attention}
\end{equation}
\end{theorem}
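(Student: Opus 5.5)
The plan is to count query--key token pairs layer by layer in a single affected full causal attention layer, then multiply by $A_{>}$. GDN and other non-attention layers contribute no query--key pairs, and pre-boundary layers are identical in both runs, so they drop out. The count splits into two additive terms: pairs formed during the current prefill, and pairs formed by the $N_q$ later queries.

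\emph{Prefill term.} In a causal attention layer over a physical sequence, the query at physical index $j$ (1-based, counting the cached prefix) attends to exactly $j$ keys. So the pair count depends only on which physical indices are occupied by queries, not on the content or logical positions of the tokens. Let $c$ be the unchanged cached-prefix length. In the uncompressed run the new block occupies indices $c+1,\dots,n_{\mathrm{phys}}$ and contributes $\sum_{j=c+1}^{n_{\mathrm{phys}}} j$ pairs. In the compressed run, all $r$ removed positions lie in the new block, and after the stable sort the mixed suffix is again contiguous. Its queries therefore occupy indices $c+1,\dots,n_{\mathrm{phys}}-r$ and contribute $\sum_{j=c+1}^{n_{\mathrm{phys}}-r} j$ pairs. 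The difference telescopes to
\[
\sum_{j=n_{\mathrm{phys}}-r+1}^{n_{\mathrm{phys}}} j=\frac{r(2n_{\mathrm{phys}}-r+1)}{2},
\]
which is independent of $c$ and of which $r$ positions were merged.

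\emph{Amortized term.} Under the same continuation, each of the $N_q$ later queries that reaches the layer sees a cache whose length differs from the uncompressed one by exactly $r$, by the retained-difference hypothesis. Each such query therefore forms $r$ fewer pairs, giving $rN_q$. Summing the two terms and multiplying by $A_{>}$ gives Equation~\ref{eq:amortized-attention}.

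The main obstacle is justifying that the prefill count is purely a function of physical length. Soft tokens inherit the logical position of their group's latest member, so one might worry that positional encodings or masking use logical rather than physical positions. I would argue that causality is enforced on the sorted physical order, so a query at physical index $j$ always has $j$ admissible keys. Logical positions affect only RoPE phases, not which pairs are formed, and this is exactly why the stable sort and the condition that all removals lie in the new block are stated as hypotheses.
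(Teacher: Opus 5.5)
Your proposal is correct and follows essentially the same argument as the paper. The paper counts $Q_{\mathrm{before}}=pm+m(m+1)/2$ and $Q_{\mathrm{after}}=p(m-r)+(m-r)(m-r+1)/2$ in one full causal attention layer and subtracts, which equals your telescoped sum $\sum_{j=n_{\mathrm{phys}}-r+1}^{n_{\mathrm{phys}}} j$; it then adds $rN_q$ from the retained $r$-position cache difference, sums over the $A_{>}$ layers, and likewise notes that only the post-sort physical length matters, so merged positions need not be contiguous.
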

The first term counts the current prefill, while $N_q$ counts reuse of the
same compact cache. Savings therefore accumulate without another compression
decision. The proof in Appendix~\ref{app:amortized-proof} concerns attention
pairs rather than wall-clock speedup.

\subsubsection{Downstream Attention Reweighting}

Compression can also change which evidence downstream attention emphasizes.
A redundant group may attract attention through multiplicity alone. The
following idealized lemma isolates that effect.

\begin{lemma}[Attention mass under idealized merging]
\label{lem:attention-refinement}
For the same downstream query in both runs, after the last logical position
of a merged group $G$ ($|G|=g>1$), assume its members and representative
share score $s$ and value $\mathbf v_G$. Let retained positions have
unchanged scores and values, with unnormalized mass $Z_R>0$ and
attention-weighted value
$\overline{\mathbf v}_R$. Their total attention masses before and after
replacement, $A_R$ and $\widetilde A_R$, satisfy
\begin{equation}
\widetilde A_R-A_R
=\frac{Z_R(g-1)e^s}
{(Z_R+e^s)(Z_R+ge^s)}>0,
\qquad
\mathbf u^\top(\widetilde{\mathbf o}-\mathbf o)
=(\widetilde A_R-A_R)
\mathbf u^\top(\overline{\mathbf v}_R-\mathbf v_G)
\label{eq:attention-refinement}
\end{equation}
for any linear readout direction $\mathbf u$.
\end{lemma}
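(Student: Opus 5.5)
The proof is a direct softmax computation; I will set up both runs explicitly and compare them. Before replacement, the query sees the $g$ members of $G$, each with unnormalized weight $e^s$, together with the retained positions of total unnormalized mass $Z_R$. The normalizer is therefore $Z_R+ge^s$, and the retained mass is $A_R=Z_R/(Z_R+ge^s)$. After replacement, the single representative carries weight $e^s$, so $\widetilde A_R=Z_R/(Z_R+e^s)$.

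\textbf{Mass difference.} Subtracting over the common denominator gives
\[
\widetilde A_R-A_R=\frac{Z_R\bigl[(Z_R+ge^s)-(Z_R+e^s)\bigr]}{(Z_R+e^s)(Z_R+ge^s)},
\]
whose numerator is $Z_R(g-1)e^s$. This is strictly positive because $Z_R>0$ and $g>1$.

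\textbf{Output difference.} Define $\overline{\mathbf v}_R=\sum_{j\in R}e^{s_j}\mathbf v_j/Z_R$. The output in each run is then a two-component convex combination:
\[
\mathbf o=A_R\overline{\mathbf v}_R+(1-A_R)\mathbf v_G,\qquad
\widetilde{\mathbf o}=\widetilde A_R\overline{\mathbf v}_R+(1-\widetilde A_R)\mathbf v_G.
\]
This holds because the members of $G$ (or the representative) contribute total mass $1-A_R$ (respectively $1-\widetilde A_R$), all with the common value $\mathbf v_G$. Subtracting the two expressions gives $\widetilde{\mathbf o}-\mathbf o=(\widetilde A_R-A_R)(\overline{\mathbf v}_R-\mathbf v_G)$. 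Applying $\mathbf u^\top$ to both sides, which is linear, yields the second identity.

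\textbf{Main obstacle.} The algebra is routine; the care lies in the bookkeeping. The key point is that the retained set $R$, with its scores and values, is identical in both runs. The query is placed after the group's last logical position so that all members and the representative are causally visible, and no positions outside $R\cup G$ enter either softmax. I would state this explicitly at the start, so that $Z_R$ and $\overline{\mathbf v}_R$ are visibly the same in both runs and the two outputs really are affine in the mass on $R$.
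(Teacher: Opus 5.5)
Your proposal is correct and follows essentially the same route as the paper's proof: it computes $A_R=Z_R/(Z_R+ge^s)$ and $\widetilde A_R=Z_R/(Z_R+e^s)$, subtracts over a common denominator, and writes both outputs as two-component convex combinations of $\overline{\mathbf v}_R$ and $\mathbf v_G$ before applying $\mathbf u^\top$. Your note that the retained set $R$ is identical in both runs and that the query follows the group's last logical position matches the paper's visibility argument.
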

The group retains one representative but loses the extra attention weight
of its $g-1$ copies. If the retained value has a larger projection along
$\mathbf u$ than $\mathbf v_G$, the attention output shifts toward it.
This describes an idealized local effect, not a guarantee of improved task
accuracy. Appendix~\ref{app:attention-refinement-proof} proves the lemma
and bounds the unequal-score and unequal-value cases separately.

\section{Experiments}
\label{sec:experiments}

We evaluate ASTC with a frozen Qwen3.5-9B acting model on
AppWorld~\citep{trivedi2024appworld},
OfficeBench~\citep{wang2024officebench}, and
8-Objective QA~\citep{zhou2025mem1,kang2026acon}.
We compare with uncompressed Base, prompted compression,
LLMLingua~\citep{jiang2023llmlingua},
LLMLingua-2~\citep{pan2024llmlingua2},
ACON UT and UTCO~\citep{kang2026acon}, and
LCLM~\citep{li2026endtoend}.
In Table~\ref{tab:main-results}, ASTC compresses only eligible tool observations.
We measure task quality and end-to-end Decode KV, Attention Pairs, and Generated
Tokens. Decode KV counts decoding query--cache interactions; Attention Pairs
counts causal query--key pairs across prefill, decoding, and auxiliary calls.
Supplementary analyses and detailed setups are in
Appendices~\ref{app:ablation-details}--\ref{app:experimental-details}.

\subsection{Main Results}

\begin{table}[!t]
\caption{Accuracy and end-to-end inference workload on three long-horizon agentic
benchmarks.}
\label{tab:main-results}
\centering
\setlength{\tabcolsep}{1.5pt}
\renewcommand{\arraystretch}{1.08}
\resizebox{\textwidth}{!}{%
\begin{tabular}{@{}lc*{13}{c}@{}}
\toprule
& & \multicolumn{4}{c}{\textbf{AppWorld}}
& \multicolumn{4}{c}{\textbf{OfficeBench}}
& \multicolumn{5}{c}{\textbf{8-Objective QA}} \\
\cmidrule(lr){3-6}\cmidrule(lr){7-10}\cmidrule(l){11-15}
Method & Training-free
& Acc.\% $\uparrow$ & D-KV\% $\downarrow$ & Attn.\% $\downarrow$ & Gen.\% $\downarrow$
& Acc.\% $\uparrow$ & D-KV\% $\downarrow$ & Attn.\% $\downarrow$ & Gen.\% $\downarrow$
& EM\% $\uparrow$ & F1\% $\uparrow$ & D-KV\% $\downarrow$ & Attn.\% $\downarrow$ & Gen.\% $\downarrow$ \\
\midrule
Base & --
& \underline{32.14} & 100.00 & 100.00 & 100.00
& 48.42 & 100.00 & 100.00 & 100.00
& 28.28 & 37.34 & 100.00 & 100.00 & 100.00 \\
\midrule
\multicolumn{15}{@{}l}{\textit{Hard-token compression}} \\
Prompting & \cmark
& 25.00 & 143.59 & \textbf{64.79} & 104.25
& \textbf{52.63} & 83.09 & \underline{77.32} & 91.42
& 26.89 & 36.61 & 97.36 & 26.68 & 232.78 \\
LLMLingua v1 & \xmark
& 5.36 & 131.00 & 159.75 & 99.81
& 24.21 & 259.99 & 319.34 & 205.51
& 14.39 & 20.55 & 104.24 & 43.38 & 110.62 \\
LLMLingua-2 & \xmark
& 7.14 & 225.53 & 3{,}364.23 & \underline{83.22}
& 44.21 & 69.59 & 961.40 & 100.50
& 24.12 & 33.56 & 49.69 & 25.37 & 87.52 \\
ACON UT & \xmark
& 29.17 & 261.91 & 1{,}064.69 & 257.49
& \underline{49.47} & 163.11 & 1{,}180.93 & 135.36
& 27.53 & 37.34 & 68.82 & 16.01 & 187.74 \\
ACON UTCO & \xmark
& 30.36 & 243.82 & 1{,}130.67 & 255.95
& 48.42 & 80.77 & 1{,}197.57 & 90.43
& \underline{28.54} & \underline{37.40} & \underline{42.99} & \underline{14.62} & 124.98 \\
\addlinespace[1pt]
\multicolumn{15}{@{}l}{\textit{Soft-token compression}} \\
LCLM & \xmark
& 6.55 & \textbf{23.94} & 1{,}643.65 & \textbf{35.08}
& 21.05 & \textbf{20.83} & 641.30 & \textbf{33.37}
& 0.00 & 0.00 & \textbf{22.47} & \textbf{0.83} & \textbf{0.20} \\
\rowcolor{LabAccent!8}
\textbf{ASTC} & \cmark
& \textbf{33.33} & \underline{81.55} & \underline{80.70} & 94.83
& 47.37 & \underline{54.52} & \textbf{49.32} & \underline{69.23}
& \textbf{30.56} & \textbf{38.46} & 78.69 & 98.51 & \underline{86.55} \\
\bottomrule
\end{tabular}
}
\vspace{2pt}
\parbox{\textwidth}{\footnotesize\emph{Note.} Workload includes auxiliary
compressor calls, including LLMLingua, and is relative to Base (100\%).
\textbf{Bold} and \underline{underlined} mark the best and second-best values
by arrow.}
\end{table}

Table~\ref{tab:main-results} shows that ASTC offers a favorable balance of task
quality and inference workload under the distinct interaction demands of the
three benchmarks. On AppWorld, it is
the only evaluated method that improves accuracy over Base while reducing all
three workload measures; competing compressors obtain isolated savings but
sacrifice accuracy or increase work elsewhere. On OfficeBench, ASTC remains close to Base in
accuracy while substantially reducing Decode KV and Attention Pairs; Prompting
achieves higher accuracy but uses more Decode KV and generated tokens than ASTC.
On 8-Objective QA, ASTC achieves the best answer quality while reducing
generated tokens. LCLM has the lowest measured workload but answers none of
the tasks correctly; ACON UTCO reduces Decode KV and Attention Pairs further
than ASTC but has lower answer quality. Taken together, these results place
ASTC's value at the trajectory level, where retained task quality and
cumulative inference workload must be judged together.

\subsection{Model-Native Selector Controls}
\label{sec:selector-controls}

We ask whether ASTC's GDN signals improve selection beyond what protection and
sequence shortening provide. We test this by retaining the acting model,
protection rules, and legal group construction while changing the selection
rule. Hidden-only uses hidden-state reconstruction error to test whether local
fidelity suffices; Random assigns fixed random scores to test whether
protected grouping alone explains the gains. Each policy runs the full
AppWorld in closed loop, allowing compression choices to
shape later actions, tool observations, and workload. Figure~\ref{fig:selector-controls}
compares task success with end-to-end Attention Pairs and shows ASTC's
strongest joint outcome, consistent with the value of model-native selection.
Because trajectories and compression events differ, the comparison evaluates
complete policies; paired bootstrap uncertainty makes the success advantage
suggestive rather than conclusive. The comparison suggests that the acting
model's accumulated state supplies information about compressibility beyond
local reconstruction (Appendix~\ref{app:selector-details}).

\begingroup
\setlength{\intextsep}{6pt}
\begin{figure}[H]
    \centering
    \includegraphics[width=\textwidth]{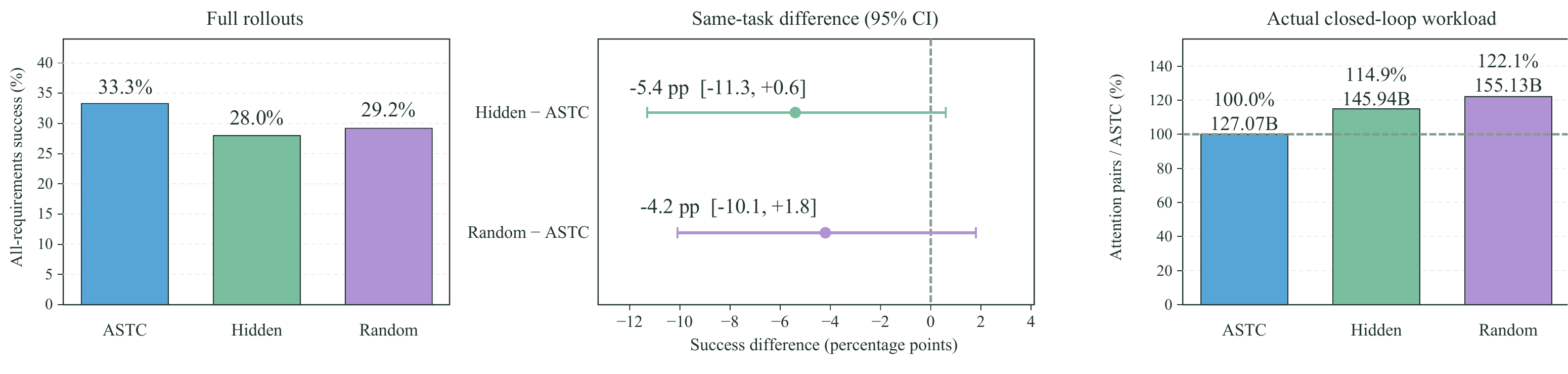}
    \caption{Selector controls on AppWorld.}
    \label{fig:selector-controls}
\end{figure}
\endgroup

\subsection{Persistence After One Commit}
\label{sec:one-commit}

Compression helps a long-horizon agent only if its shorter KV cache remains
useful when later steps revisit it, without substantially changing the
agent's decisions. To isolate this mechanism, we branch from archived Base
trajectories, allow at most one ASTC commit in one branch, and replay identical
subsequent action tokens and tool results in both. This fixed continuation
separates the effect of persistent cache compression from changes caused by
diverging actions or observations. With no further commits allowed, savings at
later horizons reflect reuse of the initial cache update rather than repeated
compression. At successive action horizons during the shared continuation, we measure
KL divergence between conditional action-token distributions and cumulative
Attention Pairs saved since the commit. Figure~\ref{fig:one-commit-reuse}
shows that the distributions remain close while savings recur at later
actions, supporting the proposed amortization across the trajectory. A
separate autonomous one-commit run probes whether this local stability carries
through to task outcomes; its success is somewhat lower than archived Base
and full ASTC, and differing archive versions make the comparison descriptive.
Appendix~\ref{app:one-commit-details} reports all replayed actions and a fixed
task cohort, showing how savings accumulate as the compact cache is reused.

\begingroup
\setlength{\intextsep}{6pt}
\begin{figure}[H]
    \centering
    \includegraphics[width=\textwidth]{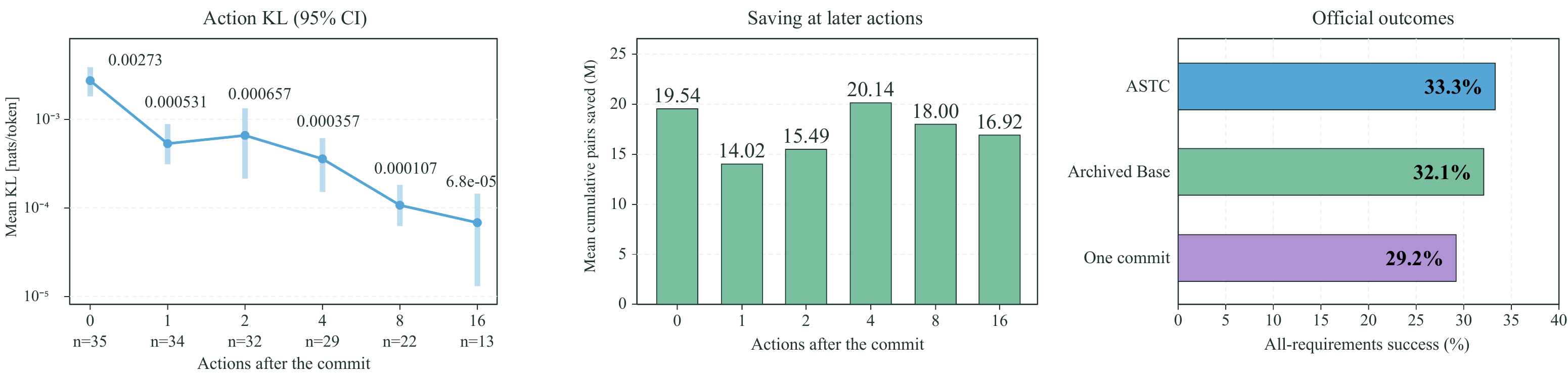}
    \caption{One-commit cache reuse and task outcomes. Middle: cumulative
    Attention Pairs saved since the commit; the observed task set varies
    across horizons.}
    \label{fig:one-commit-reuse}
\end{figure}
\endgroup

\newpage
\subsection{Extended Context Compression}
\label{sec:scope-transfer}

We extend ASTC beyond tool observations to assistant history and eligible
full-context regions. Figure~\ref{fig:scope-comparison} compares these scopes
with the Tool setting from Table~\ref{tab:main-results}.

\begingroup
\setlength{\columnsep}{12pt}
\setlength{\intextsep}{4pt}
\setlength{\emergencystretch}{2em}
\begin{wrapfigure}{r}{0.70\textwidth}
    \centering
    \small
    \includegraphics[width=\linewidth]{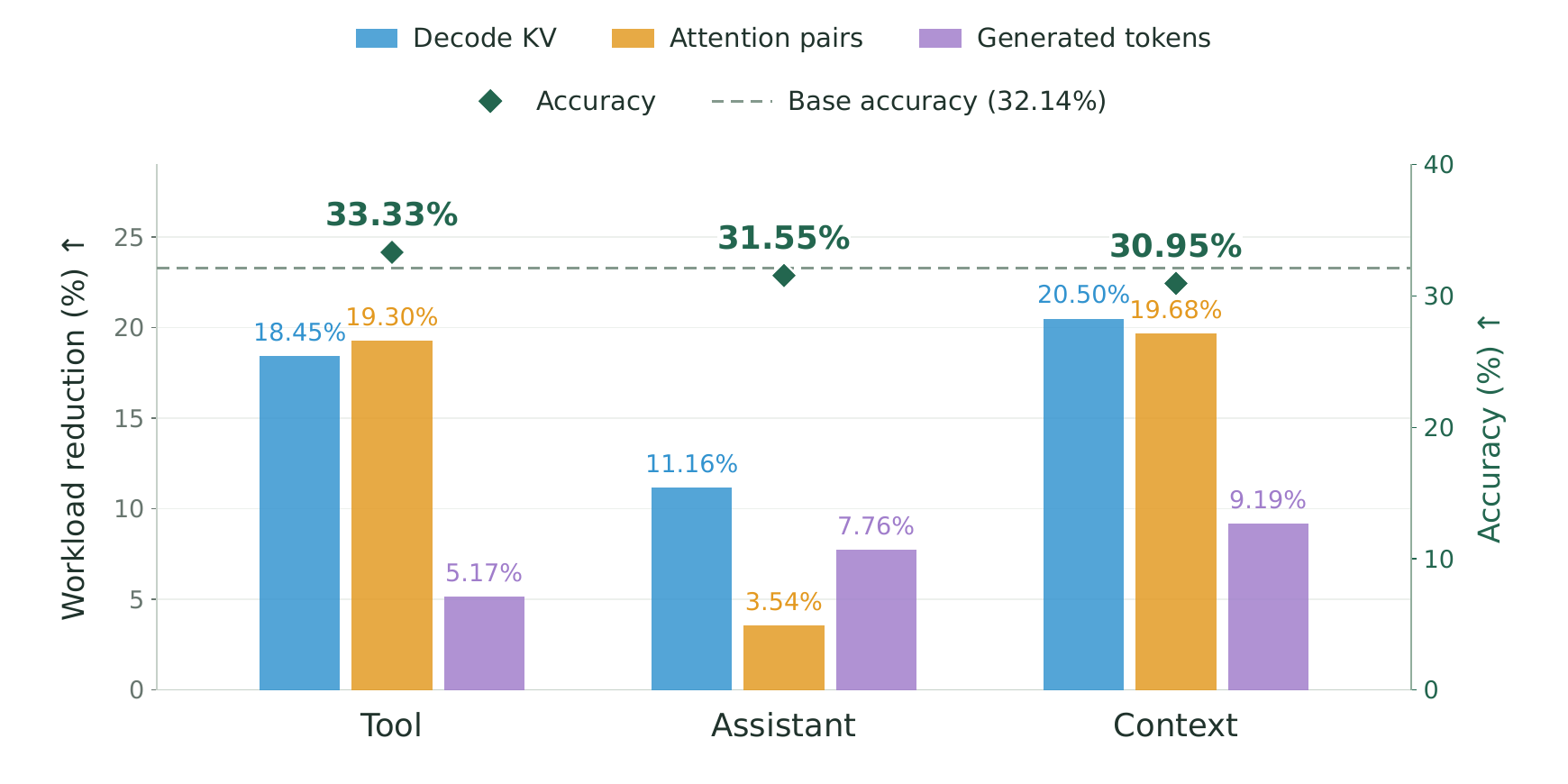}
    \caption{Compression scopes on AppWorld. Bars: workload reduction;
    diamonds: accuracy; dashed: Base (32.14\%). Workloads use each
    protocol's matched Base. Tool is from Table~\ref{tab:main-results};
    Assistant and Context are from Table~\ref{tab:scope-transfer}
    (Appendix~\ref{app:scope-transfer}).}
    \label{fig:scope-comparison}
\end{wrapfigure}

Assistant achieves 31.55\% accuracy
while reducing Decode KV by 11.16\%, Attention Pairs by 3.54\%, and generated
tokens by 7.76\%.

Context achieves 30.95\% accuracy while reducing Decode KV, Attention
Pairs, and generated tokens by
20.50\%, 19.68\%, and 9.19\%, respectively.

Most Context compression occurs in tool returns. The numerical results
and further analysis are in Appendix~\ref{app:scope-transfer},
Table~\ref{tab:scope-transfer}.
\par
\endgroup

\subsection{Cross-Model Transfer}
\label{sec:model-transfer}

\begingroup
\setlength{\columnsep}{12pt}
\setlength{\intextsep}{4pt}
\setlength{\emergencystretch}{2em}
\begin{wraptable}{r}{0.70\textwidth}
\caption{Cross-model transfer on AppWorld.}
\label{tab:model-transfer}
\centering
\small
\setlength{\tabcolsep}{1.5pt}
\renewcommand{\arraystretch}{1.08}
\begin{tabular*}{\linewidth}{@{\extracolsep{\fill}}llcccc@{}}
\toprule
Model & Method & Acc.\% $\uparrow$ & D-KV\% $\downarrow$ & Attn.\% $\downarrow$ & Gen.\% $\downarrow$ \\
\midrule
\multirow{2}{*}{Qwen3.5-9B} & Base & 32.14 & 100.00 & 100.00 & 100.00 \\
                              & ASTC & 33.33 & 81.55 & 80.70 & 94.83 \\
\addlinespace[2pt]
\multirow{2}{*}{Qwen3.8-27B} & Base & 78.57 & 100.00 & 100.00 & 100.00 \\
                            & ASTC & 77.98 & 91.70 & 64.43 & 103.82 \\
\addlinespace[2pt]
\multirow{2}{*}{Kimi-Linear-48B-A3B} & Base & 1.47 & 100.00 & 100.00 & 100.00 \\
                                      & ASTC & 4.41 & 84.01 & 88.76 & 94.70 \\
\bottomrule
\end{tabular*}
\vspace{2pt}
\parbox{\linewidth}{\footnotesize\emph{Note.} Qwen models: 168 tasks; Kimi: 68.
Workloads include failed tasks and are normalized to each model's Base (100\%).}
\end{wraptable}

Table~\ref{tab:model-transfer} compares ASTC with the uncompressed Base
across three backbones. On Qwen3.5-9B, accuracy increases from 32.14\% to
33.33\%, while Decode KV and Attention Pairs decrease by 18.45\% and 19.30\%,
respectively. On Qwen3.8-27B, accuracy decreases from 78.57\% to 77.98\%,
with 8.30\% less Decode KV and 35.57\% fewer Attention Pairs, although
generated tokens increase by 3.82\%. On Kimi-Linear, accuracy increases
from 1.47\% to 4.41\%, while Decode KV and Attention Pairs decrease by
15.99\% and 11.24\%, respectively. Protocols and selection limitations
are in Appendix~\ref{app:model-transfer}.
\par
\endgroup

\section{Conclusion}
\label{sec:conclusion}

Long-horizon agents must retain evidence for future decisions as context grows,
while separately trained compressors may drift from the acting LLM's evolving
task representation. ASTC uses native recurrent dynamics after early layers
have read the full sequence, carrying absorbed content as agentic soft tokens
and keeping exact values as hard tokens. The shorter sequence cuts later-layer
computation and creates a compact KV cache whose savings persist across
subsequent decisions. Without compressor training, ASTC keeps accuracy close
to or above the uncompressed agent with up to 50.68\% fewer end-to-end Attention
Pairs; in the controlled comparison, it attains the highest accuracy and
lowest Attention Pairs per successful task. Ablations and fixed-continuation
analysis link these gains to exact-value protection, model-native selection,
and persistent cache reuse. These results show how the acting LLM can compact
absorbed context into reusable state without losing exact evidence for later
decisions.

\begingroup
\small
\setlength{\bibsep}{2pt}
\bibliography{references}
\bibliographystyle{plainnat}
\endgroup

\clearpage
\appendix
\addtocontents{toc}{\protect\ASTCAppendixTocStart}
\begingroup
\newif\ifASTCAppendixEntry
\ASTCAppendixEntryfalse
\newcommand{\ASTCAppendixTocStart}{\ASTCAppendixEntrytrue}
\let\ASTCOriginalContentsLine\contentsline
\renewcommand{\contentsline}[4]{%
  \ifASTCAppendixEntry
    \ASTCOriginalContentsLine{#1}{#2}{#3}{#4}%
  \fi
}
\renewcommand{\contentsname}{Appendix Contents}
\setcounter{tocdepth}{2}
\tableofcontents
\endgroup
\clearpage

\section{Theoretical Derivations}
\label{app:theory-derivations}

The distributional comparisons below fix an agent history, a next-token
prefix, and a compression boundary while preserving original logical
positions. They bound one conditional next-token distribution and do not
claim that independently generated trajectories remain identical. The
workload proof separately counts later queries while the compressed cache
persists, and the attention-mass proof fixes one downstream query.

\subsection{Surrogate-Objective Mismatch}

Let $\mathcal A_k$ be the same nonempty feasible set for both objectives,
and assume the minima below exist.  Write $D(Z)=D_\theta(H,Z)$ and
$J(Z)=J_{\mathrm{tr}}(H,Z)$.  If
$\sup_{Z\in\mathcal A_k}|D(Z)-J(Z)|\leq\varepsilon$ and
$Z_{\mathrm{tr}}\in\mathcal A_k$ satisfies
$J(Z_{\mathrm{tr}})\leq\inf_{Z\in\mathcal A_k}J(Z)+\xi$, then, for
$Z_\theta^\star\in\arg\min_{Z\in\mathcal A_k}D(Z)$,
\begin{equation}
\begin{aligned}
D(Z_{\mathrm{tr}})
&\leq J(Z_{\mathrm{tr}})+\varepsilon\\
&\leq J(Z_\theta^\star)+\xi+\varepsilon\\
&\leq D(Z_\theta^\star)+2\varepsilon+\xi .
\end{aligned}
\label{eq:app-surrogate-bound}
\end{equation}
Optimality of $Z_\theta^\star$ gives the nonnegative lower endpoint.
If $J=D$ and $\xi=0$, a trained method can attain the optimum. Thus the
upper bound separates objective mismatch from optimization error but gives
no lower bound on the realized gap and does not order ASTC and trained
methods. It also does not apply across unequal feasible sets,
such as unrestricted soft states and text-reachable hard states.

\subsection{Model-Native Selection Bounds}

Write $D_v=\sum_h d_{v,h}$ for the total value width across GDN heads.
For each head $h$, the update
$\mathbf S_{i,h}=\alpha_{i,h}\mathbf S_{i-1,h}
+\beta_{i,h}\boldsymbol\delta_{i,h}\mathbf k_{i,h}^\top$ with
$\beta_{i,h}\geq0$ gives the rank-one identity
\begin{equation}
\|\mathbf S_{i,h}-\alpha_{i,h}\mathbf S_{i-1,h}\|_F
=\beta_{i,h}\|\boldsymbol\delta_{i,h}\|_2\|\mathbf k_{i,h}\|_2.
\end{equation}
The implemented RMS aggregates values across heads. Hence
$\|\boldsymbol\delta_{i,h}\|_2\leq\|\boldsymbol\delta_i\|_2
=\sqrt{D_v}\,\chi_i(\operatorname{RMS}(\mathbf v_i)+\epsilon)$.
Accepting a position with $\chi_i<\tau_\chi$ and $\psi_i<\tau_\psi$
therefore implies
\begin{equation}
\begin{aligned}
\|\mathbf S_{i,h}-\alpha_{i,h}\mathbf S_{i-1,h}\|_F
&<\beta_{i,h}\tau_\chi\sqrt{D_v}
(\operatorname{RMS}(\mathbf v_i)+\epsilon)\|\mathbf k_{i,h}\|_2,\\
\|\mathbf v_i-\widehat{\mathbf v}_i\|_2
&<\tau_\psi\sqrt{D_v}
(\operatorname{RMS}(\mathbf v_i)+\epsilon).
\end{aligned}
\label{eq:app-native-bounds}
\end{equation}
Both are local to the GDN state and the specified endpoint reader.  They
do not constrain the attention scores of the same token.  Pre-boundary
layers process every original token and preserve their recurrent state,
so, conditional on the same future input tokens, that state is unchanged
by the boundary replacement.  The remaining possible perturbation enters
through the compressed downstream sequence and its attention caches.

\subsection{Downstream Attention Stability}

Consider one attention head, the same fixed query vector in both runs after
the greatest logical position of a candidate group $G$, and its replacement.
Assume at least one retained position exists and all retained keys and
values are unchanged in this local comparison. Let $R$ be the other positions visible
to this query, and define
$W_G=\sum_{i\in G}e^{s_i}$, $W_R=\sum_{i\in R}e^{s_i}$, and
$w_z=e^{s_z}$, where $s_i$ and $s_z$ are the original and replacement
attention scores for this query.  Define
\begin{equation}
\begin{gathered}
a_G=\frac{W_G}{W_R+W_G},\qquad
b_G=\frac{w_z}{W_R+w_z},\\
\bar{\mathbf v}_G=\frac{\sum_{i\in G}e^{s_i}\mathbf v_i}{W_G},
\qquad
\bar{\mathbf v}_R=\frac{\sum_{i\in R}e^{s_i}\mathbf v_i}{W_R}.
\end{gathered}
\end{equation}
The two attention outputs and their difference are exactly
\begin{equation}
\begin{aligned}
\mathbf o&=(1-a_G)\bar{\mathbf v}_R+a_G\bar{\mathbf v}_G,\\
\widetilde{\mathbf o}
&=(1-b_G)\bar{\mathbf v}_R+b_G\mathbf v_z,\\
\widetilde{\mathbf o}-\mathbf o
&=(b_G-a_G)(\mathbf v_z-\bar{\mathbf v}_R)
  +a_G(\mathbf v_z-\bar{\mathbf v}_G).
\end{aligned}
\label{eq:app-attention-identity}
\end{equation}
The triangle inequality gives
$\|\widetilde{\mathbf o}-\mathbf o\|_2\leq
|b_G-a_G|\|\mathbf v_z-\bar{\mathbf v}_R\|_2
+a_G\|\mathbf v_z-\bar{\mathbf v}_G\|_2$.
If the value map is $L_V$-Lipschitz on the group and
its representative, and every member is within hidden-state radius
$r_G$ of the representative, the second term is at most
$a_G L_Vr_G$; the attention-mass term remains separate.

The attention-mass condition cannot be omitted.  As a counterexample,
take $g>1$ identical group values $\mathbf v_G$, one retained value
$\mathbf v_R\ne\mathbf v_G$, and equal attention scores.  If the soft
representative has value $\mathbf v_G$ and the same score, then
$a_G=g/(g+1)$ but $b_G=1/2$, so the two outputs differ despite zero
within-group reconstruction error.  The GDN innovation and endpoint
read error can also be zero in a hybrid model whose attention projections
are independent of that state.  Thus the native tests alone cannot certify
unchanged downstream decisions.

\subsection{Downstream Recurrent Stability}

Post-boundary GDN layers also consume the shortened sequence.  Consider
one such layer and align the original and compressed runs at the logical
endpoint $t_G$ of a group $G$. Starting from the same state, let their states
there differ by $e_G$. For later tokens that supply identical keys,
values, and gates to this layer, the GDN recurrence has the affine form
\begin{equation}
\mathbf S_t=\alpha_t\mathbf S_{t-1}
(\mathbf I-\beta_t\mathbf k_t\mathbf k_t^\top)
+\beta_t\mathbf v_t\mathbf k_t^\top .
\end{equation}
Its state map is $\rho_t$-Lipschitz in Frobenius norm with
$\rho_t=\alpha_t\|\mathbf I-\beta_t\mathbf k_t\mathbf k_t^\top\|_2$.
Induction therefore gives, after $m$ such tokens,
\begin{equation}
\|\mathbf S_{t_G+m}-\widetilde{\mathbf S}_{t_G+m}\|_F
\leq e_G\prod_{u=1}^{m}\rho_{t_G+u}.
\label{eq:app-state-propagation}
\end{equation}
If $\|\mathbf k_t\|_2\leq1$, $0\leq\beta_t\leq1$, and
$0\leq\alpha_t\leq1$, then $\rho_t\leq\alpha_t\leq1$.
This result is conditional on identical later inputs to that GDN layer.
It does not bound the initial defect $e_G$, which concerns a different
downstream layer from the GDN used to compute $\chi$ and $\psi$.

For a full-model statement, assume an alignment between the two finite
causal computations at every group endpoint.  Assume further that they
can be joined by a sequence of hybrids in which each adjacent pair
differs at one affected attention output or downstream recurrent state,
with all its upstream inputs equal.  Let $e_j$ bound that local
difference (using the attention bound above or a state-defect bound),
and let the map from that quantity to the next-token logits be
$L_j$-Lipschitz on the segment between the two values.  Applying the
triangle inequality to the successive hybrid logits proves
\begin{equation}
\|\widetilde{\mathbf l}-\mathbf l\|_\infty
\leq E:=\sum_jL_je_j .
\label{eq:app-logit-bound}
\end{equation}
Because log-sum-exp is 1-Lipschitz in the infinity norm, each
log-probability changes by at most $2E$ and
$D_{\mathrm{KL}}(p\|\widetilde p)\leq2E$.  If the original largest
logit exceeds the second largest by more than $2E$, its index remains
the greedy next token.  This theorem is conditional: $\chi$, $\psi$,
and the pooling threshold alone do not supply all $e_j$ or $L_j$.
It concerns the fixed next-token prefix and does not extend to
independently generated trajectories without further stability
assumptions.

\subsection{Continuous Representation at a Fixed Budget}

Fix $k$ and logical position IDs $\pi$ at the compression boundary;
hold the cached prefix and all untouched boundary states fixed. The notation
$F_{0:\ell^\star}(t_{1:k};\pi)$ evaluates all candidate sequences at those
same positions. For a finite vocabulary $\mathcal V$, the hard-token
reachable set
$\mathcal A_k^{\mathrm{hard}}
=\{F_{0:\ell^\star}(t_{1:k};\pi):t_{1:k}\in\mathcal V^k\}$
contains at most $|\mathcal V|^k$ states. In the idealized continuous
comparison, $\mathcal A_k^{\mathrm{soft}}=\mathbb R^{k\times d}$
contains this set. For any distortion $D_\theta$ defined on both sets,
\begin{equation}
\inf_{Z\in\mathcal A_k^{\mathrm{soft}}}D_\theta(H,Z)
\leq\min_{Z\in\mathcal A_k^{\mathrm{hard}}}D_\theta(H,Z).
\label{eq:app-soft-inclusion}
\end{equation}
This inclusion alone cannot establish a strict advantage: the acting
model might assign the same next-token distribution to a reachable hard
state and to the full sequence.

Let $p_H$ be the full-context next-token distribution and
$\delta_k(H)=\min_{Z\in\mathcal A_k^{\mathrm{hard}}}
\|p_H-p_\theta^{>\ell^\star}(\cdot\mid Z)\|_1$.
For any two categorical distributions $p,q$, Pinsker's inequality in
natural-log units is
\begin{equation}
D_{\mathrm{KL}}(p\|q)\geq\tfrac12\|p-q\|_1^2.
\label{eq:app-pinsker}
\end{equation}
For completeness, take $A=\{i:p_i\geq q_i\}$, $a=p(A)$, and $b=q(A)$.
Then $\|p-q\|_1=2(a-b)$. Grouping outcomes into $A$ and its complement
cannot increase KL, while the binary KL as a function of $a$ has
second derivative $1/[a(1-a)]\geq4$ and vanishes with zero derivative
at $a=b$. Hence binary KL is at least $2(a-b)^2$, which proves
Equation~\ref{eq:app-pinsker}. It follows that
\begin{equation}
\min_{Z\in\mathcal A_k^{\mathrm{hard}}}D_\theta(H,Z)
\geq\tfrac12\delta_k(H)^2.
\label{eq:app-hard-lower-bound}
\end{equation}
If $\delta_k(H)>0$, no hard-token sequence of length $k$ reproduces
the full-context next-token distribution exactly. The finiteness of
the hard output set makes this condition equivalent to $p_H$ being
absent from that set.

For ASTC's particular output $Z_{\mathrm{ASTC}}$, the conditional
logit bound in Equation~\ref{eq:app-logit-bound} gives
$D_\theta(H,Z_{\mathrm{ASTC}})\leq2E$. Therefore
\begin{equation}
E<\delta_k(H)^2/4
\quad\Longrightarrow\quad
D_\theta(H,Z_{\mathrm{ASTC}})
<\min_{Z\in\mathcal A_k^{\mathrm{hard}}}D_\theta(H,Z).
\label{eq:app-strict-soft-advantage}
\end{equation}
The statement compares equal physical length, fixed positions, and one
next-token distribution. Its premise requires an actual bound on all
downstream attention and recurrent defects. The ambient continuous set
is larger than ASTC's attainable mean-pooled set, and finite-precision
execution further restricts it. Thus the set inclusion establishes
capacity, while Equation~\ref{eq:app-strict-soft-advantage} specifies
when one ASTC representation realizes a strict advantage.

For completeness, the pre-cast representative in
Equation~\ref{eq:native-soft-token} is the least-squares solution. With
$\bar{\mathbf h}_G=|G|^{-1}\sum_{i\in G}\mathbf h_i$, the identity
\begin{equation}
\sum_{i\in G}\|\mathbf h_i-\mathbf z\|_2^2
=\sum_{i\in G}\|\mathbf h_i-\bar{\mathbf h}_G\|_2^2
+|G|\|\mathbf z-\bar{\mathbf h}_G\|_2^2
\end{equation}
shows that the FP32 mean minimizes squared hidden-state error. This does
not minimize downstream KL; ASTC checks reconstruction again after casting.

\subsection{Proof of Theorem~\ref{thm:amortized-attention}}
\label{app:amortized-proof}

\begin{proof}
Let $p$ be the unchanged physical length of the cached prefix and $m$ the
number of newly processed positions before the commit, so
$n_{\mathrm{phys}}=p+m$ and $0<r<m$. At one full causal attention layer,
the new queries create
\begin{equation}
Q_{\mathrm{before}}=pm+\frac{m(m+1)}{2}
\end{equation}
query--key pairs: each attends the $p$ cached keys and its causal prefix in
the new block. After replacing $r$ new positions, the unchanged cached
prefix still contributes $p$ keys, but only $m-r$ new queries remain. Thus
\begin{equation}
Q_{\mathrm{after}}
=p(m-r)+\frac{(m-r)(m-r+1)}{2}.
\end{equation}
Subtracting gives the same-pass reduction
\begin{equation}
Q_{\mathrm{before}}-Q_{\mathrm{after}}
=pr+\frac{r(2m-r+1)}{2}
=\frac{r(2n_{\mathrm{phys}}-r+1)}{2}.
\end{equation}
The fresh-prefill case is $p=0$. Every later query that actually reaches
this attention layer under the same fixed continuation has exactly $r$
fewer historical keys while the compared caches retain their length
difference; $N_q$ such queries add $rN_q$ saved pairs. Summing
over the $A_{>}$ affected full-attention layers proves
Equation~\ref{eq:amortized-attention}. The group positions need not be
contiguous because the count depends on the physical length after stable
sorting, not on which original positions were merged. The result excludes
work already spent on the cached prefix and assumes no intervening change
to the compared cache that alters this $r$-position difference. A future
token compressed before reaching these layers is not counted in $N_q$.
\end{proof}

\subsection{Proof of Lemma~\ref{lem:attention-refinement}}
\label{app:attention-refinement-proof}

\begin{proof}
The query follows the greatest logical position of $G$, so all $g$ original
members are visible in the uncompressed run and their one representative
is visible in the compressed run. Their identical scores give group
unnormalized masses $ge^s$ and $e^s$, respectively. Because the retained
positions are unchanged, their attention masses are
\begin{equation}
A_R=\frac{Z_R}{Z_R+ge^s},\qquad
\widetilde A_R=\frac{Z_R}{Z_R+e^s}.
\end{equation}
Direct subtraction yields
\begin{equation}
\widetilde A_R-A_R
=\frac{Z_R(g-1)e^s}{(Z_R+e^s)(Z_R+ge^s)}>0,
\end{equation}
since $g>1$ and $Z_R>0$. The two attention outputs are
\begin{equation}
\mathbf o=A_R\overline{\mathbf v}_R+(1-A_R)\mathbf v_G,
\qquad
\widetilde{\mathbf o}
=\widetilde A_R\overline{\mathbf v}_R
+(1-\widetilde A_R)\mathbf v_G.
\end{equation}
Subtracting and taking the inner product with any $\mathbf u$ gives
Equation~\ref{eq:attention-refinement}. Positivity of the projected
change requires $\mathbf u^\top(\overline{\mathbf v}_R-\mathbf v_G)>0$;
it is not a claim about arbitrary nonlinear decisions. If scores or
values differ, the exact identity in Equation~\ref{eq:app-attention-identity}
separates attention-mass shift from value error.
\end{proof}

\clearpage
\section{Agent Trajectory Examples}
\label{app:trajectory-examples}

Figure~\ref{fig:agent-trajectory-demo} shows two AppWorld trajectories.
ASTC retains exact fields as hard tokens and replaces selected history
positions with soft tokens. The reported attention-pair reductions describe
logical computation in these trajectories, not measured wall-clock speedups.

\begin{figure}[H]
\centering
\includegraphics[width=\textwidth,height=0.82\textheight,keepaspectratio]{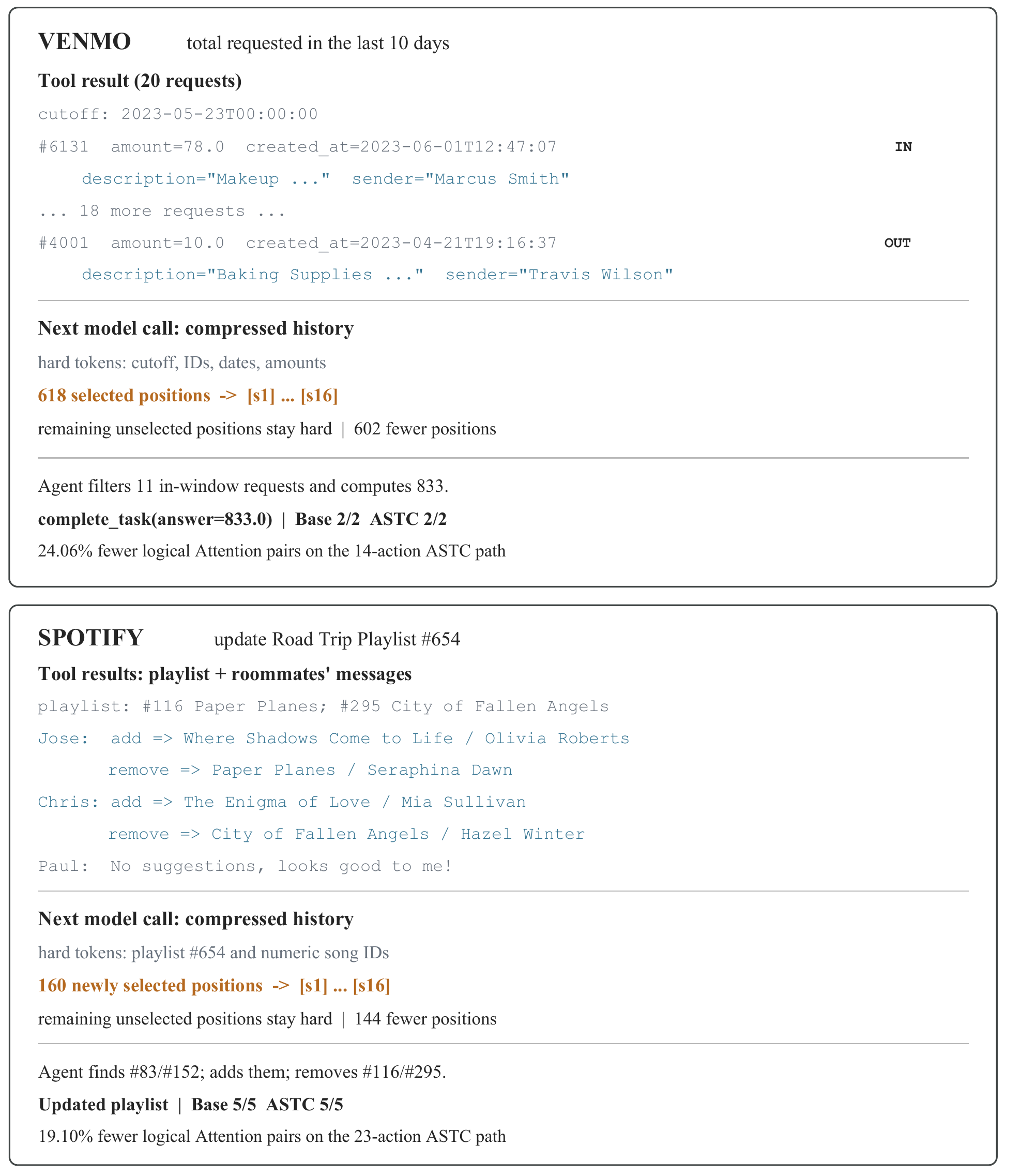}
\caption{ASTC compression in Venmo and Spotify agent trajectories.}
\label{fig:agent-trajectory-demo}
\end{figure}

\clearpage
\section{Component Ablations}
\label{app:ablation-details}

To identify which components sustain ASTC's accuracy--efficiency balance,
we remove protection, either GDN selection test ($\chi$ or $\psi$), or
soft-token replacement one at a time on the same AppWorld task set.
All variants keep the frozen model and inference configuration fixed.
Hard deletion uses ASTC's selector but drops positions instead of emitting
soft tokens.

\begin{table}[H]
\caption{Component ablations on AppWorld.}
\label{tab:component-ablation}
\centering
\small
\setlength{\tabcolsep}{1.5pt}
\renewcommand{\arraystretch}{1.08}
\begin{tabular*}{0.75\linewidth}{@{\extracolsep{\fill}}lcccc@{}}
\toprule
Variant
& Acc. $\uparrow$ & D-KV $\downarrow$ & Attn. $\downarrow$ & Gen. $\downarrow$ \\
\midrule
Base
& 32.14\% & 100.00\% & 100.00\% & 100.00\% \\
ASTC 
& \textbf{33.33\%} & \textbf{81.55\%} & 80.70\% & \textbf{94.83\%} \\
No protection
& 20.83\% & 98.46\% & \textbf{68.63\%} & 127.05\% \\
No $\chi$
& 32.14\% & 84.79\% & 78.11\% & 97.76\% \\
No $\psi$
& 32.74\% & 89.72\% & 86.08\% & 99.65\% \\
Hard deletion
& 32.74\% & 88.83\% & 86.85\% & 99.58\% \\
\bottomrule
\end{tabular*}
\end{table}

Removing protection lowers accuracy from 33.33\% to 20.83\% (21 fewer
successes), even as Attention Pairs fall to 68.63\% of Base; Generated Tokens
rise to 127.05\%. Losing API contracts, identifiers, or exact values can
invalidate later actions and lengthen recovery trajectories.

Without $\chi$, accuracy returns to Base's 32.14\%, two fewer successes
than full ASTC. Attention Pairs fall to 78.11\%, but Decode KV and Generated
Tokens rise relative to full ASTC. Without $\psi$, accuracy is 32.74\% and
all three workload measures rise. The tests address distinct cases: new
writes to recurrent state and values no longer readable at the endpoint.

Hard deletion loses one success and raises Decode KV, Attention Pairs,
and Generated Tokens from ASTC's 81.55\%, 80.70\%, and 94.83\% to 88.83\%,
86.85\%, and 99.58\%, respectively. Retaining the group-level signal as
a soft token supports later decisions in this comparison.

\clearpage
\section{Additional Dataset Experiments}
\label{app:additional-datasets}

We evaluate ASTC with the frozen Qwen3.5-9B model on RULER 32K
\citep{hsieh2024ruler}, LongBench v2~\citep{bai2024longbenchv2}, and
interactive code repair on MBPP~\citep{austin2021mbpp}. The respective sample
counts are 260, 503, and 427. All comparisons use a matched Base run within
each benchmark; the protocols are detailed in
Appendix~\ref{app:additional-protocols}.

\begin{table}[H]
\caption{Additional-dataset results.}
\label{tab:additional-datasets}
\centering
\setlength{\tabcolsep}{2pt}
\renewcommand{\arraystretch}{1.08}
\resizebox{\textwidth}{!}{%
\begin{tabular}{@{}l*{9}{c}@{}}
\toprule
& \multicolumn{3}{c}{\textbf{RULER 32K}}
& \multicolumn{3}{c}{\textbf{LongBench v2}}
& \multicolumn{3}{c}{\textbf{MBPP}} \\
\cmidrule(lr){2-4}\cmidrule(lr){5-7}\cmidrule(l){8-10}
Method
& Acc.\% $\uparrow$ & D-KV\% $\downarrow$ & Attn.\% $\downarrow$
& Acc.\% $\uparrow$ & D-KV\% $\downarrow$ & Cache TB $\downarrow$
& Pass\% $\uparrow$ & D-KV\% $\downarrow$ & Attn.\% $\downarrow$ \\
\midrule
Base & 97.173 & 100.00 & 100.00
     & 40.95 & 100.00 & 2.065
     & \textbf{84.78} & 100.00 & 100.00 \\
\rowcolor{LabAccent!8}
\textbf{ASTC} & 97.173 & \textbf{93.96} & \textbf{85.57}
     & 40.95 & \textbf{92.93} & \textbf{1.921}
     & 84.07 & \textbf{92.42} & \textbf{92.45} \\
\bottomrule
\end{tabular}%
}
\end{table}

Figure~\ref{fig:six-dataset-results} combines these additional-dataset results
with the three agentic benchmarks in Table~\ref{tab:main-results}.

\begin{figure}[H]
\centering
\includegraphics[width=\textwidth]{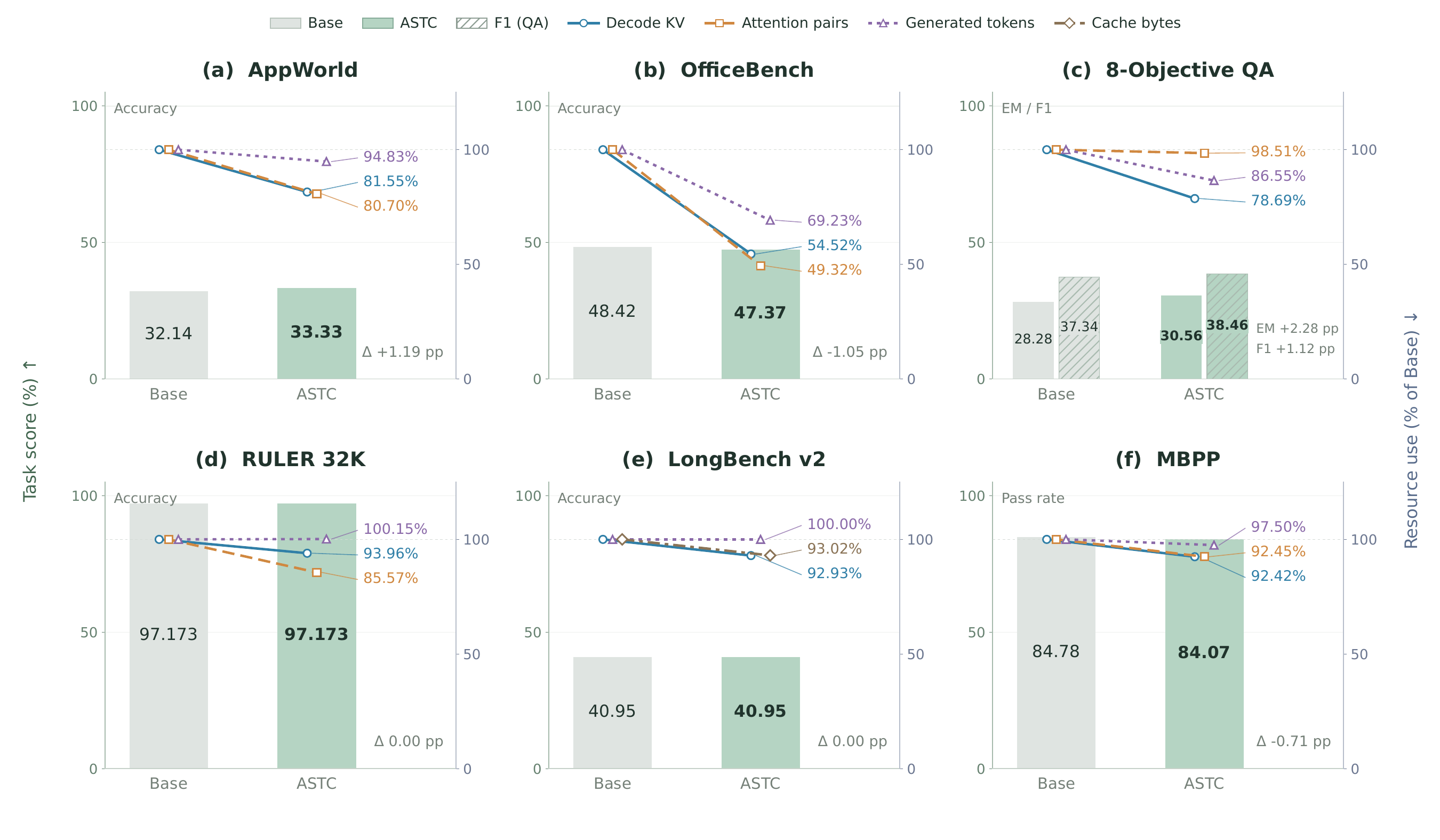}
\caption{Task quality and inference resources across six datasets with frozen
Qwen3.5-9B. Bars show accuracy, EM/F1, or pass rate on the left axis; lines show
resource use relative to each dataset's matched Base on the right axis
(100\%). Lower resource use is better. Panels (a)--(c) visualize
Table~\ref{tab:main-results}, and panels (d)--(f) complement
Table~\ref{tab:additional-datasets}. LongBench v2 shows cache bytes because
Attention Pairs were not reported. Supplementary generated-token and cache
ratios use the underlying experiment records.}
\label{fig:six-dataset-results}
\end{figure}

On RULER, ASTC matches Base's aggregate answer score to the reported
precision while reducing Decode KV by 6.04\% and Attention Pairs by 14.43\%.
It migrates 1,045,409 of 6,485,833 eligible tokens into 1,468 soft tokens,
for a 12.56\% net prompt-token reduction. This 20-example-per-task run is
smaller than RULER's 500-example reference configuration.

LongBench v2 yields 206/503 correct for both variants, with four Base-only
and four ASTC-only successes. ASTC reduces Decode KV visits by 7.07\%,
all-layer final KV tokens by 6.19\%, and analytical KV-cache bytes by 6.98\%.
The subgroup changes are mixed (Base/ASTC: Easy 44.79/45.83\%, Hard
38.59/37.94\%; Short 44.44/45.56\%, Medium 40.00/39.53\%, Long
37.04/36.11\%). The 262,144-token limit truncates 100 examples in each
variant; these figures do not establish a wall-clock speedup.

\begin{samepage}
On MBPP, ASTC compresses at least one span in 182/427 trajectories and
reduces workload; its paired success difference is $-0.70$ percentage points
(95\% bootstrap interval $[-1.87,0.23]$). Mean generated tokens per problem
are 89.2/89.4 on RULER, 2.0/2.0 on LongBench, and 337.4/329.0 on MBPP
(Base/ASTC). The MBPP run does not establish a success-rate improvement.
\par
\end{samepage}

\section{Extended Context Compression Details}
\label{app:scope-transfer}

Section~\ref{sec:scope-transfer} compares two extended scopes with an
uncompressed replay Base. Table~\ref{tab:scope-transfer} gives the numerical
results, while Figure~\ref{fig:scope-comparison} also includes the Tool
setting from Table~\ref{tab:main-results}.

\begin{table}[H]
\caption{Effect of compression scope on AppWorld.}
\label{tab:scope-transfer}
\centering
\small
\setlength{\tabcolsep}{1.5pt}
\renewcommand{\arraystretch}{1.08}
\begin{tabular*}{\linewidth}{@{\extracolsep{\fill}}lcccc@{}}
\toprule
Scope
& Acc.\% $\uparrow$
& D-KV\% $\downarrow$
& Attn.\% $\downarrow$
& Gen.\% $\downarrow$ \\
\midrule
Base
& \textbf{32.14} & 100.00 & 100.00 & 100.00 \\
\textbf{Assistant}
& 31.55 & 88.84 & 96.46 & 92.24 \\
\textbf{Context}
& 30.95 & \textbf{79.50} & \textbf{80.32} & \textbf{90.81} \\
\bottomrule
\end{tabular*}
\vspace{2pt}
\parbox{\linewidth}{\footnotesize\emph{Note.} Base is the uncompressed replay
control; costs are normalized to Base (100\%).}
\end{table}

All runs use frozen Qwen3.5-9B with fixed
Stage-3 calibration and no gradient updates on all 168 AppWorld
Test-Normal tasks. The data thresholds are $(\chi,\psi)=(0.60,0.80)$,
with merge-error threshold $0.50$ and at most 16 soft tokens per closed
region. Greedy decoding permits 50 actions, 768 tokens per action, and
a 32{,}768-token logical context.

Assistant compresses completed assistant outputs; Context additionally
permits eligible initial system/user prompts and tool returns. Both preserve
the compact prefix and replay only the newly completed assistant/tool suffix
from its pre-generation prefix snapshot. The uncompressed replay control
uses the same procedure, and all local replay work, failed tasks, and
budget-limited tasks are included. Special tokens, protocol boundaries,
API contracts, authentication content, and protected literals remain
explicit. Base in Table~\ref{tab:scope-transfer} denotes this uncompressed
replay control, and both compression variants are normalized to it.

These are two selected configurations from four candidates, evaluated as
a full-split repeated confirmation after earlier Test-Normal development.
Context's realized compression is mainly from tool returns, with no
initial-prompt tokens removed. The workload changes combine compression
geometry, generation length, and trajectory changes. They describe the
observed accuracy--workload tradeoff, not a statistical guarantee of
unchanged success or a measured speedup.

\section{Cross-Model Transfer Details}
\label{app:model-transfer}

Section~\ref{sec:model-transfer} reports cross-model results in
Table~\ref{tab:model-transfer}. Each frozen backbone uses a reader
calibrated for its own internal states and a matched uncompressed Base.
Qwen3.5-9B and Qwen3.8-27B cover all 168 AppWorld Test-Normal tasks;
Kimi-Linear covers the first 68 in official order. Within each model,
Base and ASTC share task IDs and decoding settings; workload is normalized
separately for each comparison.

Qwen3.8-27B uses BF16 and Stage-5 compression at zero-based decoder index
18, with a reader calibrated on 190 snapshots from 12 Dev tasks and no
weight updates. The data branch uses $(\chi,\psi)=(0.47,0.67)$ and
merge-error threshold $0.34$. The reported sampling stream uses seed
$20270912+i$ for task $i$, shared by Base and ASTC. The configuration and
stream were selected after Test exploration; this is not an independent
blind test or an average across seeds. Costs include failed tasks and are
cache-geometry proxies, not measured memory traffic or latency.

\clearpage
\clearpage
\begingroup
\setlength{\intextsep}{5pt}
\section{Supplementary Selector-Control Analysis}
\label{app:selector-details}

We extend Section~\ref{sec:selector-controls} with offline summaries of the
same 168 Stage-3 Qwen3.5-9B rollouts per policy
(Appendix~\ref{app:benchmark-configurations}). Hidden-only and Random share
protection, legal groups, and the midpoint of the attainable budget; Random
uses selection seed 101. ASTC follows its native $\chi/\psi$ commit path.
Independent actions and tool results allow contexts and budgets to diverge,
so this compares complete policies rather than an equal-budget GDN intervention.

\subsection{Workload Breakdown and Cost per Successful Task}

Both workload components favor ASTC (Table~\ref{tab:selector-workload}):
prefill pairs are 12.30\%/19.01\% lower and decode visits 15.65\%/13.72\%
lower than Hidden-only/Random. Fewer actions and generated tokens also
contribute, so the totals combine structural and behavioral effects.

\begin{table}[H]
\centering
\caption{Full-split selector workload, including failed tasks. Pair counts
are in billions ($10^9$). Pairs/success divides the total cost of all 168
tasks by the number of official all-requirements successes.}
\label{tab:selector-workload}
\small
\setlength{\tabcolsep}{3pt}
\renewcommand{\arraystretch}{1.12}
\begin{tabular}{@{}lrrrrrrr@{}}
\toprule
Policy & Successes & Actions & Gen. tokens & Prefill & Decode & Total & Pairs/success \\
\midrule
ASTC & 56 & 4{,}203 & 291{,}482 & 103.771 & 23.297 & 127.068 & 2.269 \\
Hidden-only & 47 & 4{,}418 & 312{,}155 & 118.320 & 27.619 & 145.939 & 3.105 \\
Random (101) & 49 & 4{,}425 & 304{,}251 & 128.131 & 27.003 & 155.134 & 3.166 \\
\bottomrule
\end{tabular}
\end{table}

Including failed attempts, ASTC uses \textbf{26.9\% and 28.3\% fewer pairs
per successful task} than Hidden-only and Random. The denominator varies
with success; this is not a mean restricted to successful trajectories.

\subsection{Paired Outcomes and the Distribution of Workload Savings}

On discordant tasks, ASTC-only versus control-only successes are 19:10
against Hidden-only and 17:10 against Random. ASTC's paired success
differences are $+5.36$ percentage points (95\% CI $[-0.60,11.31]$) and
$+4.17$ points ($[-1.79,10.12]$); exact two-sided McNemar $p$-values are
0.136 and 0.248. Neither success interval excludes zero.

The workload advantage covers 65.5\%/66.7\% of tasks against
Hidden-only/Random (Table~\ref{tab:selector-task-cost}). The exploratory
workload intervals describe a different endpoint from task success.

\begin{table}[H]
\centering
\caption{Task-paired Attention Pairs over all 168 tasks. Lower/equal/higher
compares ASTC with the control. The difference is ASTC minus control, in
millions ($10^6$) of pairs per task; brackets give a task-bootstrap 95\% CI.}
\label{tab:selector-task-cost}
\small
\setlength{\tabcolsep}{6pt}
\renewcommand{\arraystretch}{1.12}
\begin{tabular}{@{}lrrrl@{}}
\toprule
Control & Lower & Equal & Higher & Mean difference [95\% CI] \\
\midrule
Hidden-only & 110 & 8 & 50 & $-112.32$ [$-225.90$, $-2.70$] \\
Random (101) & 112 & 8 & 48 & $-167.06$ [$-294.27$, $-41.46$] \\
\bottomrule
\end{tabular}
\end{table}

\subsection{Structural Savings Within Each Policy's Own Trajectories}

\begin{table}[H]
\centering
\caption{Structural savings on each policy's own inputs and actions. Logical
counts assume no shortening; actual counts use realized layer lengths (billions
of pairs). Savings are
$1-\sum P_{\mathrm{actual}}/\sum P_{\mathrm{logical}}$, with the analogous
calculation for each component.}
\label{tab:selector-structure}
\small
\setlength{\tabcolsep}{5pt}
\renewcommand{\arraystretch}{1.12}
\begin{tabular}{@{}lrrrrr@{}}
\toprule
Policy & Logical pairs & Actual pairs & Total saved & Prefill saved & Decode saved \\
\midrule
ASTC & 152.197 & 127.068 & 16.51\% & 18.36\% & 7.16\% \\
Hidden-only & 164.365 & 145.939 & 11.21\% & 12.52\% & 5.10\% \\
Random (101) & 175.791 & 155.134 & 11.75\% & 13.06\% & 4.99\% \\
\bottomrule
\end{tabular}
\end{table}

{\footnotesize\noindent\textbf{Scope and uncertainty.} Test-Normal was used
during development. Intervals use 10{,}000 task-bootstrap resamples: success
seed 20260923; new percentile workload intervals, sorted task IDs and NumPy
\texttt{default\_rng} seed 20261019. New summaries are post-hoc without
multiple-comparison correction. Structural ratios do not causally decompose
cross-policy totals. Attention Pairs do not measure total FLOPs or latency.\par}
\endgroup

\clearpage
\section{Supplementary One-Commit Persistence Analysis}
\label{app:one-commit-details}

We extend Section~\ref{sec:one-commit} using the completed fixed-input
replays. The 36 archived Base tasks were selected before inspecting outcomes
as the first 36 in a frozen SHA256 ordering. Of these, 35 admit a first legal,
beneficial observation commit and one has no event. Base and Single build
independent caches from the task start and receive identical archived action
tokens and tool results at the original prefill and single-token decode
boundaries. Single commits once and bypasses subsequent observations while
retaining the same compressed cache. Horizon 0 is the committing action;
horizon $h$ is the $h$th subsequent action. Supplementary calculations are
offline and require no new model runs.

\subsection{Fidelity Across All Observed Post-Commit Actions}

The complete post-commit interval contains 599 actions and 62{,}275 reference
tokens across 35 triggered tasks, including the committing actions.
Table~\ref{tab:replay-overall} uses all of them, rather than only the six
horizons in the main figure. We compute full-vocabulary
$\mathrm{KL}(p_{\mathrm{Base}}\|p_{\mathrm{Single}})$ in FP32, average
token metrics within each action, average actions within each task, and
then weight the 35 tasks equally. Top-1 agreement compares the conditional
argmax at each reference-token position. Reference-token $\Delta$NLL is
Single minus Base, so positive values mean lower reference probability
under Single.

\begin{table}[H]
\centering
\caption{Overall fixed-input replay fidelity on all 35 triggered tasks.
Intervals are task-bootstrap 95\% CIs; actions within a task are not treated
as independent samples. KL and $\Delta$NLL are in nats/token.}
\label{tab:replay-overall}
\small
\setlength{\tabcolsep}{10pt}
\renewcommand{\arraystretch}{1.12}
\begin{tabular}{@{}lll@{}}
\toprule
Metric & Task mean & 95\% CI \\
\midrule
Action KL & $5.86\times10^{-4}$ & $[3.75,8.59]\times10^{-4}$ \\
Token top-1 agreement & 99.719\% & [99.576, 99.836]\% \\
Reference-token $\Delta$NLL & $6.64\times10^{-4}$ & $[1.68,13.00]\times10^{-4}$ \\
\bottomrule
\end{tabular}
\end{table}

In \textbf{530/599 actions (88.48\%)}, every reference-token prediction
has the same top-1 in both branches. The largest action KL, retained in all
summaries, is 0.016267 nats/token (task \texttt{2d9f728\_3}, zero-based
action 8). The positive overall $\Delta$NLL shows a small measurable
distributional change despite high top-1 agreement. These are conditional
predictions under shared reference prefixes; they do not establish identical
autonomous actions, correct tool arguments, or task-quality equivalence.

\subsection{Persistence in a Fixed Task Cohort}

The main figure's task count decreases from 35 at horizon 0 to 13 at
horizon 16 as trajectories end. We therefore repeat the summary using the
\emph{same 13 tasks observable through horizon 16} at every displayed
horizon (Table~\ref{tab:replay-fixed-cohort}). Selection depends only on
archived trajectory length, not success or KL. This post-hoc sensitivity
analysis applies to that conditional cohort.

\begin{table}[H]
\centering
\caption{Fixed-cohort persistence: the same 13 tasks at every horizon.
Pairs saved are cumulative from the committing action, in millions;
the last column averages each task's cumulative saving fraction.}
\label{tab:replay-fixed-cohort}
\small
\setlength{\tabcolsep}{8pt}
\renewcommand{\arraystretch}{1.12}
\begin{tabular}{@{}rrrrr@{}}
\toprule
Horizon & Mean KL & Top-1 agreement & Cumulative pairs saved & Saved fraction \\
\midrule
0 & $4.26\times10^{-3}$ & 99.412\% & 7.526 & 14.578\% \\
1 & $6.20\times10^{-4}$ & 99.529\% & 8.511 & 10.287\% \\
2 & $4.40\times10^{-4}$ & 99.633\% & 9.079 & 8.092\% \\
4 & $1.87\times10^{-4}$ & 100.000\% & 10.590 & 6.439\% \\
8 & $1.30\times10^{-4}$ & 100.000\% & 13.093 & 4.474\% \\
16 & $6.80\times10^{-5}$ & 100.000\% & 16.920 & 3.199\% \\
\bottomrule
\end{tabular}
\end{table}

Within this fixed cohort, mean cumulative savings increase from
\textbf{7.53M to 16.92M pairs}, and no task's cumulative savings decrease.
Measured KL remains small at later horizons, so changing task membership
alone cannot explain the low late-horizon KL in these 13 traces. The saving
fraction declines because subsequent total workload grows while the original
cache reduction remains fixed.

\clearpage
\subsection{Cumulative Savings and Cache Consistency}

For each task, cumulative savings through horizon $h$ are
\begin{equation}
S_h=\sum_{a=a_0}^{a_0+h}
\left(P_a^{\mathrm{Base}}-P_a^{\mathrm{Single}}\right),
\end{equation}
where $a_0$ is the committing action and $P_a$ includes prefill and decode
pairs. The main figure's middle panel reports this cumulative quantity,
averaged over tasks still observed at each horizon. Since those sets differ,
its cross-horizon means need not increase; each individual task's cumulative
savings do. All 599 post-commit actions have nonnegative pair savings and
all 35 triggered tasks have positive final cumulative savings.

Through their natural endings, the 35 tasks save 1.152B of 33.847B Base
pairs in the matched post-commit intervals (3.40\% in aggregate).
Mean final savings are 32.91M pairs per task (95\% CI [19.61, 50.41]M).
Averaging each task's saving fraction instead gives 5.95\% (CI
[3.95, 8.27]\%); this weights tasks equally rather than by workload.
These counts exclude compression overhead and do not establish a net
wall-clock speedup or a measured break-even point.

\begin{table}[H]
\centering
\caption{Replay consistency audit. All counts include the full observed
records; no action was truncated for scoring. A forward is one prefill or
one single-token decode call.}
\label{tab:replay-audit}
\small
\setlength{\tabcolsep}{8pt}
\renewcommand{\arraystretch}{1.12}
\begin{tabular}{@{}lr@{}}
\toprule
Audit & Verified count \\
\midrule
Paired forwards with identical input-token digests & 74{,}032/74{,}032 \\
Equal cache lengths at attention layers 3 and 7 & 74{,}032/74{,}032 \\
Exact $R$-position reduction at all six downstream attention layers & 62{,}275/62{,}275 \\
Pre-commit actions with exactly zero KL & 267/267 \\
No-event task actions with exactly zero KL & 50/50 \\
Triggered tasks with exactly one cache-local commit & 35/35 \\
\bottomrule
\end{tabular}
\end{table}

Each event removes $R$ historical positions from each of the six downstream
attention caches. Median $R$ is 231 (IQR 72--590.5), giving a median of
1{,}386 fewer historical visits per subsequent query (IQR 432--3{,}543).
Across all events, 14{,}302 merged positions become 538 soft tokens, a net
reduction of 13{,}764 positions. The maximum recorded post-cast reconstruction
error is 0.4841, below 0.50 in every event. The audit checks input-token
identity and cache geometry; it does not require identical protection masks
between Base and Single.

\noindent\textbf{Relation to autonomous task outcomes.} The separate
whole-task one-commit policy succeeds on 49/168 tasks, compared descriptively
with 54/168 archived Base and 56/168 released ASTC. Released ASTC may commit
once for each new observation, rather than once for the entire task.
First-commit action, $R$, and generated token IDs agree with released ASTC
on all 160 triggered tasks, but 10 of 37 comparable source files differ
between archives. These scores therefore do not identify a same-code causal
effect of allowing later commits. They also do not turn fixed-input fidelity
into an autonomous success or noninferiority guarantee.

\noindent\textbf{Uncertainty and scope.} New overall fidelity and final
savings intervals use 10{,}000 percentile task-bootstrap resamples, NumPy
\texttt{default\_rng} seed 20261019. The fixed cohort is exploratory; no
multiple-comparison correction is applied. Replay measures prediction
fidelity on stored inputs, not retrieval accuracy for facts available only
in merged observations.
\clearpage

\section{Experimental Setup and Protocols}
\label{app:experimental-details}

\subsection{General Setup}
\label{app:experimental-setting}

\noindent\textbf{Benchmarks.} We evaluate ASTC on three long-horizon agentic
benchmarks with distinct interaction structures. AppWorld provides a simulated
ecosystem of everyday applications and APIs for interactive coding agents; we
use its full 168-task Test-Normal split~\citep{trivedi2024appworld}.
OfficeBench covers realistic office-automation workflows that require planning,
application switching, and grounded actions across productivity
software~\citep{wang2024officebench}. 8-Objective QA groups eight Natural
Questions into each task and requires an agent to gather evidence with a
Wikipedia search tool before returning all eight
answers~\citep{kwiatkowski2019natural,zhou2025mem1,kang2026acon}.
Together, they test compression across API interaction, office automation,
and multi-question information gathering.

\noindent\textbf{Implementation.} The three main benchmarks use the frozen
Qwen3.5-9B acting model. For Table~\ref{tab:main-results}, all methods within each
benchmark share the acting-model serving configuration and evaluation protocol.
Base and ASTC also share the agent scaffold, prompts, decoding policy, context
limit, and action budget.

\noindent\textbf{Model checkpoints.} The main experiments use
\texttt{Qwen/Qwen3.5-9B} at Hugging Face revision
\texttt{c202\allowbreak 2362\allowbreak 3576\allowbreak 2e1c\allowbreak 871a\allowbreak d0cc\allowbreak b60c\allowbreak 8ee5\allowbreak ba33\allowbreak 7b9a}~\citep{qwen2026qwen359b}.
The transfer study also uses \texttt{moonshotai/\allowbreak Kimi-\allowbreak Linear-\allowbreak 48B-\allowbreak A3B-\allowbreak Instruct}
at revision
\texttt{e1df\allowbreak 551a\allowbreak 4471\allowbreak 57d4\allowbreak 658b\allowbreak 573f\allowbreak 9a69\allowbreak 5d57\allowbreak 6585\allowbreak 90e9}~\citep{moonshot2025kimilinear}.
The cross-model study in Section~\ref{sec:model-transfer} also uses
the frozen \texttt{Qwen/Qwen3.8-27B} model in BF16.

\noindent\textbf{Bounded pooling.} Selected positions are divided into at
most $M$ groups in causal order. In each group, an existing member closest
to the mean direction anchors an initial trim by relative $\ell_2$ error.
ASTC then averages the remaining states in FP32, casts the mean to the
runtime dtype, and accepts a group of at least two members only if its
maximum post-cast relative error is at most $\tau_h$. Rejected and trimmed
positions remain hard tokens. This check bounds hidden-state reconstruction,
not downstream attention scores.

\noindent\textbf{Baselines.} Base is the uncompressed acting LLM.  We compare
ASTC with representative
hard-token approaches, including prompted compression, LLMLingua
~\citep{jiang2023llmlingua}, LLMLingua-2~\citep{pan2024llmlingua2}, and the UT
and UTCO variants of ACON~\citep{kang2026acon}, as well as LCLM, a trained
soft-token compressor~\citep{li2026endtoend}. These external-compressor
comparisons are reported on AppWorld and OfficeBench; 8-Objective QA also
includes Prompting, both LLMLingua variants, both ACON variants, and LCLM. The newly
reported AppWorld LLMLingua-2 and OfficeBench Prompting and LLMLingua runs
compress observations; the remaining hard-token baselines compress the full
context, whereas ASTC compresses eligible tool observations in-model.
All methods that require training use Qwen3.5-9B as the backbone and are
evaluated with the same benchmark protocol. End-to-end workload includes
acting-model requests, auxiliary compressor-service calls, compressor-generated
tokens, and repeated prefills. The 8-Objective QA counts include the
LLMLingua v1 and LLMLingua-2 service forwards.

\noindent\textbf{Workload metrics.} Decode KV counts query--cache interactions
during autoregressive decoding. Attention Pairs also includes prefill and
auxiliary model calls, while Generated Tokens counts all discrete outputs.
These are analytical counts, not measured latency or GPU operations. Each
workload metric in Table~\ref{tab:main-results} is normalized to Base within
its benchmark.

\noindent\textbf{Related context and cache optimization.}
Meta Context Engineering (MCE) co-evolves agentic skills and context
artifacts for task-specific context construction~\citep{ye2026mce}.
Archer reuses prompt K/V while recomputing mutable responses in
rollback-capable diffusion language models~\citep{he2026archer}.
These works address context construction and rollback-aware caching,
respectively, while ASTC focuses on in-model compression and persistent
cache reuse along agent trajectories.

\subsection{Benchmark-Specific Configurations}
\label{app:benchmark-configurations}

For the three main benchmarks in Table~\ref{tab:main-results}, we follow the
benchmark tasks, agent scaffolds, model-serving settings, inference limits, and
evaluation procedures of ACON~\citep{kang2026acon}.
The benchmark settings and ASTC-specific compression policies are detailed
below.

\noindent\textbf{AppWorld.} We evaluate all 168 AppWorld 0.1.3.post1
Test-Normal tasks under this ACON-aligned setup. The frozen Qwen3.5-9B agent uses
greedy decoding, at most 50 actions, 768 output tokens per action, and a
32{,}768-token logical context. The model seed is 20260914, and task $i$ uses
seed $20260912+i$ in the released task order. ASTC operates after zero-based
GDN layer 10 and before attention layer 11, retaining at most 16 soft tokens
per tool observation. Its fixed reader is calibrated from 22 records across
12 Train/Dev tasks, without weight updates. Each head uses leading eigenvectors
of trace-normalized tool-end state covariances; the ridge and RMS stabilizers
are $\lambda=10^{-4}$ and $\epsilon=10^{-6}$. The ordinary-data thresholds are
$\chi<0.60$, $\psi<0.80$, and merge error $\leq0.50$; the conservative
catalog thresholds are $0.55$, $0.74$, and $0.55$, respectively. A commit must
save at least 256 downstream token-layer positions and 5\% of eligible tool
positions. API contracts, catalogs, and exact-value fields are protected.

\noindent\textbf{OfficeBench.} We use the official Test split, Qwen3.5-9B, seed 20260920, and the official environment and evaluator.
The agent uses at most 50 iterations, 768 output tokens per action, and a
32{,}768-token context. OfficeBench uses the same ASTC layer, selection
thresholds, merge bound, and persistent cache as AppWorld. Only later
byte-identical repetitions of an application-interface scaffold are eligible
for compression; the first copy, changed scaffolds, observations, task
instructions, system text, and prior actions remain explicit. Base and ASTC
share the prompt and action budget.

\noindent\textbf{8-Objective QA.} All methods evaluate the same ACON Test
tasks. The frozen Qwen3.5-9B
acting model uses BF16 with vLLM 0.18.0, a 131{,}072-token model limit,
engine seed 42, and at most 30 agent iterations for every method. ASTC
compresses eligible tool content at the AppWorld layer boundary; the reported
policy uses
$(\chi,\psi)=(0.55,0.75)$ for ordinary tool data. EM and F1 score the eight answers per task. All methods
share the same serving and evaluation configuration, and workload is normalized
to the corresponding Base run.

\subsection{Additional-Dataset Protocols}
\label{app:additional-protocols}

\noindent\textbf{RULER.} We evaluate the same 32K inputs in Base and ASTC
with the frozen Qwen3.5-9B model and a 32,768-token context limit. RULER
scores answers by task-specific string matching and averages over its 13 task
configurations. We use 20 examples per task, rather than the reference
configuration's 500. Thus the reported accuracy is a 260-example aggregate;
matching aggregate scores do not imply identical results on every task or at
the larger default sample size. ASTC uses the Stage-3 data branch with
$(\chi,\psi)=(0.60,0.80)$ and merge-error threshold $0.50$. Decode KV and
Attention Pairs are analytical query--cache and causal-pair counts; Gen. is
generated tokens, all normalized to Base.

\noindent\textbf{LongBench v2.} We use all 503 four-choice questions in the
public \texttt{train} split~\citep{bai2024longbenchv2}. Base and ASTC share
the frozen Qwen3.5-9B model and a 262,144-token context limit; examples that
exceed the limit are truncated before either method processes them. Of the
503 examples, 100 require truncation. The inference configuration permits at
most 16 new tokens; the recorded runs generate two tokens per example in both
variants. ASTC uses the Stage-3 catalog branch with
$(\chi,\psi)=(0.55,0.74)$, merge-error threshold $0.55$, and minimum net
removal of one token. Overall and subgroup accuracy are exact-match rates for
the extracted choice. Reported KV counts and cache bytes are analytical
logical workload measures, not latency or measured peak device memory.

For MBPP, we use the sanitized task snapshot from
Archer~\citep{he2026archer}.

\noindent\textbf{MBPP.} We use its 427 sanitized MBPP
problems~\citep{austin2021mbpp}.
This interactive code-repair protocol is not numerically comparable with
one-shot MBPP evaluations. The model submits a complete Python
program, receives public-test feedback, and revises it; a third attempt is
allowed if the second fails. The final program is scored once. Both variants
use greedy decoding, at most 768 generated tokens per attempt, and a
32,768-token context limit. ASTC uses its Stage-3 calibration, thresholds,
and 16-soft-token bound. Initial prompts, completed code, and public-test
feedback are eligible for compression, while special control tokens remain
explicit. Both variants re-prefill the full transcript before each turn;
workload includes these repeated prefills and all decoding calls.

\end{document}